\theoremstyle{plain}
\newtheorem{theorem}{Theorem}[section]
\newtheorem{proposition}[theorem]{Proposition}
\theoremstyle{definition}
\newtheorem{definition}[theorem]{Definition}
\theoremstyle{remark}
\newcommand{\eat}[1]{}
\newcommand{\lb}{\left(}
\newcommand{\rb}{\right)}
\definecolor{Gray}{gray}{0.95}
\definecolor{Cyan}{rgb}{0.88,1,1}
\definecolor{Blu}{RGB}{68,115,196} % define our blue 
\newcolumntype{a}{>{\columncolor{Gray}}c}
\icmltitlerunning{Feature Expansion for Graph Neural Networks}
\begin{document}

\twocolumn[
\icmltitle{Feature Expansion for Graph Neural Networks}

% It is OKAY to include author information, even for blind
% submissions: the style file will automatically remove it for you
% unless you've provided the [accepted] option to the icml2023
% package.

% List of affiliations: The first argument should be a (short)
% identifier you will use later to specify author affiliations
% Academic affiliations should list Department, University, City, Region, Country
% Industry affiliations should list Company, City, Region, Country

% You can specify symbols, otherwise they are numbered in order.
% Ideally, you should not use this facility. Affiliations will be numbered
% in order of appearance and this is the preferred way.
\icmlsetsymbol{equal}{*}

\begin{icmlauthorlist}
\icmlauthor{Jiaqi Sun}{sigs}
\icmlauthor{Lin Zhang}{idea}
\icmlauthor{Guangyi Chen}{cmu,mbz}
\icmlauthor{Peng Xu}{cuhk}
\icmlauthor{Kun Zhang}{cmu,mbz}
\icmlauthor{Yujiu Yang}{sigs}
\end{icmlauthorlist}

\icmlaffiliation{sigs}{Shenzhen International Graduate School, Tsinghua University, China}
\icmlaffiliation{cmu}{Carnegie Mellon University, USA}
\icmlaffiliation{mbz}{Mohamed bin Zayed University of Artificial Intelligence, UAE}
\icmlaffiliation{idea}{International Digital Economy Academy, China}
\icmlaffiliation{cuhk}{Chinese University of Hong Kong, China}

\icmlcorrespondingauthor{Yujiu Yang}{yang.yujiu@sz.tsinghua.edu.cn}
\icmlcorrespondingauthor{Lin Zhang}{linzhang0529@gmail.com}

% You may provide any keywords that you
% find helpful for describing your paper; these are used to populate
% the "keywords" metadata in the PDF but will not be shown in the document
\icmlkeywords{Machine Learning, ICML}

\vskip 0.3in
]

% this must go after the closing bracket ] following \twocolumn[ ...

% This command actually creates the footnote in the first column
% listing the affiliations and the copyright notice.
% The command takes one argument, which is text to display at the start of the footnote.
% The \icmlEqualContribution command is standard text for equal contribution.
% Remove it (just {}) if you do not need this facility.

\printAffiliationsAndNotice{}  % leave blank if no need to mention equal contribution
% \printAffiliationsAndNotice{\icmlEqualContribution} % otherwise use the standard text.

\begin{abstract}
Graph neural networks aim to learn representations for graph-structured data and show impressive performance, particularly in node classification.
Recently, many methods have studied the representations of GNNs from the perspective of optimization goals and spectral graph theory.
However, the feature space that dominates representation learning has not been systematically studied in graph neural networks.
In this paper, we propose to fill this gap by analyzing the feature space of both spatial and spectral models.
We decompose graph neural networks into determined feature spaces and trainable weights, providing the convenience of studying the feature space explicitly using matrix space analysis.
In particular, we theoretically find that the feature space tends to be linearly correlated due to repeated aggregations.
In this case, the feature space is bounded by the poor representation of shared weights or the limited dimensionality of node attributes in existing models, leading to poor performance.
Motivated by these findings, we propose 1) feature subspaces flattening and 2) structural principal components to expand the feature space.
% which together form a new style of graph neural network that behaves as a \textit{wide architecture}.
%
Extensive experiments verify the effectiveness of our proposed more comprehensive feature space, with comparable inference time to the baseline, and demonstrate its efficient convergence capability.

\end{abstract}

\section{Introduction}
Graph Neural Networks~(GNNs) have shown great potential in learning representations of graph-structured data, such as social networks, transportation networks, protein interaction networks, etc.~\cite{fan2019graph,wu2020comprehensive,khoshraftar2022survey}.
In this paper, we focus on node representation learning, which is one of the most important tasks in this line of research, where the key is to represent nodes in an informative and structure-aware way.
% learn meaningful low-dimensional representations for downstream tasks, e.g., node classification.

There are two different types of graph neural networks.
One is spatial, which aggregates information from neighboring nodes and updates the representation of the central node.~\cite{velickovic2018gat,xu2018powerful,huang2020combining}. 
% Usually it assumes that the connected parts share similar attributes and focus on creating various aggregate and update functions.
%
The spectral type, on the other hand, treats the graph structure matrix, such as the Laplacian matrix, as a transformation for the nodes' attributes~(signals) in the spectral domain~\citep{defferrard2016convolutional,chien2021gprgnn,he2021bernnet,he2022convolutional}. The aim is to develop flexible functions for the graph structure so that the signals of the nodes can fit the labels appropriately.

Recently, several perspectives for analyzing GNN representations have emerged, such as general optimization functions, denoising frameworks, and spectral graph theory~\cite{zhu2021interpreting,unifiedgnn_ma,balcilar2021bridge}.
However, as a determinant of representation learning, feature spaces have not been systematically studied for graph neural networks.
In general representation learning, performance depends heavily on the construction of feature spaces with accessible data~\cite{bengio2013representation}.
% For node representation learning, this is more generally reflected when without considering the data distribution, which is the comprehensiveness of the feature space formed by GNNs using the graph structure and node attributes.

In this paper, we propose to fill this gap and investigate the feature space for both spatial and spectral GNNs.
Specifically, for theoretical investigations, we first abstract a linear approximation of the GNNs following the studies~\cite{wu2019sgc,xu2018powerful,wang2022powerful}.
Then, we decompose the GNN components with and without parameters in the linear approximation, where the latter is considered as a feature space built by node attributes and graph structure (e.g., adjacency or Laplacian matrices), and the former denotes the learnable parameters to reweight the features.

Taking advantage of the convenience of decomposition, we examine the feature space of current models.
Motivated by that GNNs are expected to fit arbitrary objective, a more comprehensive feature space reflects better representability without any assumption about the data distribution.
However, we find theoretically that 
the feature subspaces of current GNNs are bounded by the weight-sharing mechanism and the limited dimensionality of the node attributes.
In order to alleviate the above restrictions and expand the feature space, we proposed \textit{1) feature subspace flattening} and \textit{2) structural principal components}, respectively.
Specifically, the former reweights all feature subspaces independently to obtain a fully expressed representation.
The latter adds the principal components of graph structure matrices as a "complement" to the original feature space.
%
% to help case where the dimensionality of node attributes is limited and no assumptions can be applied to feature space construction (e.g., heterophilic graph). 
%
% appended sequentially and tend to be linearly correlated.
% To expand the 
%
% Given this, we further show that the current weight-sharing mechanism will impact the expressiveness of the complete feature space.
% To this end, we propose \textit{1) feature subspace flattenings} to flatten all feature subspaces into an expanded one so that their representations are not constrained.
% However, the feature space constructed by current methods is insufficient in the case where the dimensionality of node attributes is limited and no assumptions can be applied to feature space construction (e.g., heterophilic graph).
% In view of this problem, we propose \textit{2) structural principal components} to expand the feature space in a general approach without any assumptions.
% Moreover, the proposed two modifications contribute to a new style of graph neural networks, i.e., \textit{wide architecture}.
%
% In general, our proposal includes two techniques to expand the feature space for better node representations without any data assumptions.
% It also advocates more attention for the feature space designs compared with current methods that focus on model design.
It is emphasized that our proposal makes no assumptions about the graph or the target, which enjoys good generality.
We perform extensive experiments on both homophilic and heterophilic datasets to demonstrate the superiority of the proposal.

Our contributions are listed below:
\begin{itemize}
    \item Starting from representation learning, we provide the first study of the feature space formed in graph-structured data. Based on this view, we study typical spatial and spectral GNNs and identify two problems of existing GNNs caused by bounded feature spaces.
    \item We then propose two modifications: 1) feature subspace flattening and 2) structural principal components to expand the whole feature space.
    \item Extensive experiments are performed on homophilic and heterophilic datasets, and our proposal achieves significant improvements, e.g. an average accuracy increase of 32\% on heterophilic graphs.

\end{itemize}

% for nodes that aligning with the labels using the node attributes and the connected relations. For instance, in citation graphs, one paper's topic should be more transparent with considering its reference.

% Though the principals of the two branches seem orthogonal usually, their similar technical implementation details and empirical performance indicate they might share some commons in essence. 
% Discovering the potential commons that are consistent with their effectiveness should be critical and more direct for boosting subsequent works for this community.
% %
% There have been integrated framework for those two types of graph neural networks, such as from the perspective of optimization objectives~\cite{zhu2021interpreting,unifiedgnn_ma} and taking spectral view as a universal angle~\cite{balcilar2021bridge}. 
% However, they focus on summarizing general formulas for several instances while ignore to identify the essential characters that GNNs should obey.

% for a better approximation of the precise solutions under the analysis of linear systems.
% the constrained feature space potentially leads to inferior performance of graph neural networks. 
% For instance, the repeatedly stacked aggregation layers in GNNs leads to compressed feature space that can hardly to fit the labels, e.g., over-smoothing. 
% Meanwhile, we find that existing models that tend to maintain more feature subspaces show better performance~\cite{chen2020simple,chien2021gprgnn}.
\section{Preliminaries}
\label{sec:preliminaries}

\begin{table*}[t]
    \centering%\vspace{-0.5cm}
    \caption{Feature space and parameters for GNN models (better viewed in color)}
    \begin{adjustbox}{width=\textwidth}
    \renewcommand{\arraystretch}{1.2}
    \begin{tabular}{cll}
    \toprule
         &  Original formula$^{*}$ &  Linear approximation formulations \\
        \midrule
        GCN & \multirow{2}{*}{$H^{(k+1)}=\sigma\lb\hat{A}H^{(k)}W^{(k)}\rb$} 
        & \multirow{2}{*}{$H^{(K)}=\textcolor{orange}{\hat{A}^KX}\textcolor{Blu}{\prod_{i=0}^{K-1} W^{(i)}}$}\\
        {\cite{kipf2016gcn}}&&\\
        GIN & \multirow{2}{*}{$H^{(k+1)}=\sigma\lb (\epsilon^{(k)}I+\hat{A}) H^{(k)} W_0^{(k)}\rb W_1^{(k)}$} 
        & \multirow{2}{*}{$H^{(K)}=\sum_{t=0}^{K} \textcolor{orange}{\hat{A}^kX} \textcolor{Blu}{\sum_{\{q_0,\cdots,q_{K-t-1}\}\subseteq\{\epsilon^{(0)},\cdots,\epsilon^{(K-1)}\}}\prod_i q_i\cdot\prod_{j=0}^{K-1}W_0^{(j)}W_1^{(j)}} $}\\
        {\citep{xu2018powerful}}&&\\
        GCNII & \multirow{2}{*}{$ H^{(l+1)}=\sigma\lb \lb (1-\alpha^{(l)})\hat{A}H^{(l)}+\alpha^{(l)}H^{(0)} \rb \lb (1-\beta^{(l)})I+\beta^{(l)}W^{(l)} \rb \rb$ }
        & \multirow{2}{*}{$H^{(K)}=\sum_{l=0}^{K-1} \textcolor{orange}{\hat{A}^lX}\textcolor{Blu}{\prod_{i=L-l}^{L-1}(1-\alpha^{(i)})\alpha^{(L-l-1)}\prod_{j=L-l-1}^{L-1}W^{(j)}\}}+ \textcolor{orange}{\hat{A}^K}\textcolor{Blu}{\prod_{h=0}^{K-1}(1-\alpha^{(h)})W^{(h)}}$}\\
        {\citep{chen2020simple}}&&\\
        ARMA & \multirow{2}{*}{$H^{(K)}=\sigma(\tilde{L}H^{(K-1)}W_1+XW_2)$} & \multirow{2}{*}{$H^{(K)}=\sum_{t=0}^K \textcolor{orange}{\tilde{L}X} \textcolor{Blu}{W_2^{t}W_1^{K-t}}$} \\
        {\citep{bianchi2021arma}}&&\\
        APPNP & \multirow{2}{*}{$H^{(k+1)}=(1-\alpha)\hat{A}H^{(l)}+\alpha H^{(0)};H^{(0)}=\sigma(XW_1)W_2$} 
        & \multirow{2}{*}{$H^{(K)}=\sum_{t=0}^{K} \textcolor{orange}{(1-\alpha)^t\hat{A}^lH^{(0)}+\sum_{i=0}^{t-1}\alpha(1-\alpha)^i\hat{A}^i H^{(0)}} \textcolor{Blu}{W_1W_2} $}\\
        {\citep{klicpera2019appnp}}&&\\
        ChebyNet$^{**}$ & \multirow{2}{*}{$H= \sum_{k=0}^KP_k(\hat{L})XW^{(k)}$} 
        & \multirow{2}{*}{$H^{(K)}=\sum_{t=0}^{K} \textcolor{orange}{P_t(\hat{L})X} \textcolor{Blu}{W^{(t)}} $}\\
        {\citep{defferrard2016convolutional}}&&\\
        GPRGNN & \multirow{2}{*}{$H=\sum_{k=0}^K\gamma^{(k)}\hat{L}^k\sigma(XW_1)W_2$ }
        & \multirow{2}{*}{$H^{(K)}=\sum_{t=0}^{K} \textcolor{orange}{\hat{L}^tX}\textcolor{Blu}{\gamma^{(t)}W_1W_2}$}\\
        {\citep{chien2021gprgnn}}&&\\
        BernNet & \multirow{2}{*}{$H=\sum_{k=0}^K\frac{1}{2^K}\binom{K}{k}\gamma^{(k)}(2I-\hat{L})^{K-k}\hat{L}^k\sigma(XW_1)W_2$} 
        & \multirow{2}{*}{$H^{(K)}=\sum_{t=0}^{K} \textcolor{orange}{\sum_{j=0}^{t} \frac{1}{2^j}  \tbinom{K}{j}  \hat{L}^{t}}\textcolor{Blu}{\sum_{j=0}^{t} \gamma^{(j)} W_1W_2}$}\\
        {\citep{he2021bernnet}}&&\\
        % WGNN~(Ours) & $H=\sum_{k=0}^K P_k(\hat{L})XW^{(k)} + SW^{(s)}$ 
        % & $H=\sum_{k=0}^{K-1} \textcolor{orange}{P_k(\hat{L})X}\textcolor{Blu}{W^{(l)}}+\sum_{j=0}^{J-1} \textcolor{orange}{S_j}\textcolor{Blu}{W^{(j)}}$\\
        \bottomrule
        \multicolumn{3}{l}{$^*$ Without specification, $H^{(0)}=X$.}\\
        \multicolumn{3}{l}{$^{**}$ $T_k(x)$ denotes Chebyshev polynomial $P_0(x)=1,P_1(x)=x,P_k(x)=2xP_{k-1}-P_{k-2}$.}\\
    \end{tabular}
    \end{adjustbox}%\vspace{-0.5cm}
    \label{tab:gnn_elements_big}
\end{table*}
In this paper, we focus on the undirected graph $\mathcal{G}=(\mathcal{V}, \mathcal{E})$, along with its node attributes of $\mathcal{V}$ as $X\in\mathbb{R}^{n\times d}$ and adjacency matrix $A\in\mathbb{R}^{n\times n}$ to present $\mathcal{E}$. 
GNNs take the input of the node attributes and the adjacency matrix, and output the hidden node representations, as $H=\mathtt{GNN}(X, A) \in\mathbb{R}^{n\times d}$.
By default, we employ the cross-entropy loss function in the node classification task to minimize the difference between node label $Y$ and the obtained representation as $\mathcal{L}(H,Y) = -\sum_i Y_i \log \mathrm{softmax}(H_i)$.

\textbf{Spatial GNNs~(with non-parametric aggregation)} mostly fall into the message-passing paradigm. 
For any given node, it essentially aggregates features from its neighbors and updates the aggregated feature,
% the central node aggregates the neighbors' features and updates itself, 
\begin{equation}
\begin{aligned}
    H_i^{(k+1)}=\sigma\lb f_{u}\lb H_i^{(k)},f_{a}\lb{\hat{A}_{ij},H_j^{(k)};j\in \mathcal{N}_i}\rb\rb\rb,
    \label{eq:spatial_gnn}
\end{aligned}
\end{equation}
where $\sigma\lb\cdot\rb$ is a non-linear activation function, $H^{(k)}$ indicates the hidden representation in $k$-th layer, $f_{a}$ and $f_{u}$ are the aggregation and updating functions~\citep{balcilar2021bridge}, 
$\hat{A}=(D+I)^{-1/2}(A+I)(D+I)^{-1/2}$ is the re-normalized adjacency matrix using the degree matrix $D$, and $\mathcal{N}_i$ denotes the $1$-hop neighbors.

Here, we provide two examples to specify this general expression. 
One is the vanilla GCN\footnote{GCN~\cite{kipf2016gcn} is also an instance of spectral GNNs, and here we categorize it into spatial type due to its critical role of bridging spectral GNNs to spatial interpretations.}
~\cite{kipf2016gcn} that adopts the mean-aggregation and the average-update, whose formulation is:
\begin{align}
    H^{(k+1)} = \sigma\lb\hat{A}H^{(k)}W^{(k)} \rb.
    \label{eq:gcn}
\end{align} 
% Beyond this fundamental formula, 
% By using the skip-connection~\citep{xu2018powerful, li2019deepgcns}, the other example shows a different update schema, which updates the central node and neighboring nodes separately~\citep{chen2020simple} and is defined as follows, 
% % Generally, they are formulated by

The second example shows a different update scheme with skip-connection~\citep{xu2018powerful, li2019deepgcns, chen2020simple}, which is defined as follows,
\begin{align}
    H^{(k+1)} = \sigma\lb \alpha^{(k)} H^{(0)}W^{(k)}_0 + \hat{A}H^{(k)}W^{(k)}_1 \rb,
    \label{eq:gin}
\end{align}
where $\alpha^{(k)}$ controls the weight of each layer's skip-connection, $W_0^{(k)}, W_1^{(k)}$ are the transformation weights for the initial layer and the previous one, respectively. 

\textbf{Spectral GNNs (polynomial-based)} 
originally employ the Graph Fourier transforms to get filters~\citep{chung1997spectral}, such as using the eigendecomposition of the Laplacian matrix: $\hat{L}=I-\hat{A}=U\Lambda U^T$. In recent years, methods of this type have focused more on approximating arbitrary global filters using polynomials~\citep{wang2022powerful,zhu2020simple, he2021bernnet}, which has shown superior performance and is written as
\begin{align}
    H = \sum_{k=0}^K \gamma^{(k)}P_k(\hat{L})\sigma(XW_1)W_2,
    \label{eq:spectral_gnn}
\end{align}
where $P_k(\cdot)$ donates a polynomial's $k$-order term; $\gamma^{(k)}$ is the adaptive coefficients and $W_1,W_2$ are learnable parameters. 

In this paper, we focus on the typical instances of the two types of graph neural networks, as indicated in the parentheses above.
For spatial models, we focus on those with non-parametric aggregation, excluding learnable aggregation such as \citep{velickovic2018gat} in the analytical parts.
Considering the spectral type, we focus on polynomial-based models, and other spectral filters are not included in this paper, e.g. \cite{levie2018cayleynets,thanou2014graphdict}.
It is worth noting that these cases dominate the state of the art, and we still include other methods in empirical comparisons.

\textbf{Primary observation.}\quad 
From the review of GNN models, we can conclude that usually, the node attributes $X$ and the graph structure matrices $\hat{L}/\hat{A}$ are computed first and then some parameter matrices are applied to obtain the final node representation. 
Starting from this observation, in the following, we extract a linearly approximated framework including GNNs that first construct feature spaces and then apply parameters to reweight them to obtain node representations.

\section{Analysis}
To perform theoretical investigations of the feature space, we abstract a linear approximation of GNNs based on the success of linearisation attempts of \citet{wu2019sgc,xu2018powerful,wang2022powerful}. 
Specifically, we provide a general formulation for the linear approximation of arbitrary graph neural networks. $\overline{\mathtt{GNN}}(X,\hat{A})$ as: 
\begin{align}
    H = \overline{\mathtt{GNN}}(X,\hat{A}) = \sum_{t=0}^{T-1} \textcolor{orange}{\Phi_t(X,\hat{A})} \textcolor{Blu}{\Theta_t},
    \label{eq:linear}
\end{align}
where $\Phi_t(X,\hat{A})\in \mathbb{R}^{n\times d_t}$ is the non-parametric feature space constructing function that inputs the graph data (e.g., node attributes and graph structure) and outputs a feature subspace,
$\Theta\in\mathbb{R}^{d_t\times c}$ is the parameter space to reweight the corresponding feature subspace for each class $c$,
and $T$ is a hyper-parameter of the number of the feature subspaces that the GNN contains. In general, in this linear approximation, a GNN model forms $T$ feature subspaces, i.e., $\Phi_t$, and outputs the addition of all the reweighted subspaces using the respective parameters $\Theta_t$.
Note that the (total) feature space is the union of the subspaces as $\Phi=\{\Phi_t\}_{t=0,1,\cdots,T-1}$. Similarly, we have the (total) parameters $\Theta=\{\Theta_t\}_{t=0,1,\cdots,T-1}$.
Besides, the number of the subspaces $T$ that a GNN model obtains is not parallel with its layer/order, for which we will provide some examples in the later revisiting subsection.

In the following, we will first identify the feature space $\Phi$ and the parameters $\Theta$ for existing GNNs. 
Then, from the perspective of the feature space, we analyze the common mode across different model lines.
Finally, we investigate and summarize the problems behind this mode.

\subsection{Revisiting Existing Models}
\textbf{Spatial GNNs (with non-parametric aggregation).} \quad 
We first transform the recursive formula of spatial GNNs, e.g.,~\eqref{eq:spatial_gnn}, into an explicit formula, by iterating from the initial node attributes that $H^{(0)}=X$ and ignoring the activation function. 
Following Section~\ref{sec:preliminaries}, we consider two examples of spatial GNNs: the vanilla GCN~\citep{kipf2016gcn} and the one with skip-connections~\citep{xu2018powerful}.

The linear approximated explicit formula of a $K$-layer is:
\begin{align}
    H^{(K)} = \textcolor{orange}{\hat{A}^K X} \textcolor{Blu}{\prod_{i=0}^{K-1} W^{(i)}},
    \label{eq:spatial_dl_1}
\end{align}
which forms single feature space $\Phi_0 = \hat{A}^{K} X$ and parameters $\Theta_0=\prod_{i=0}^{K-1} W^{(i)}$ with~$T=1$. 
It is a perfect example that the number of GNN layers $K$ is not identical to the number of the subspaces $T$ a GNN forms.
While \eqref{eq:gin} furthermore considers skip-connections, whose $K$-layer linear approximated explicit formula is formualted as:
\begin{equation}
\begin{aligned}
    H^{(K)} &=  \sum_{i=0}^{K-1} \textcolor{orange}{\hat{A}^{i} X} \textcolor{Blu}{\alpha^{(K-1-i)}W_0^{(K-1-i)} \prod_{j=K-i}^{K-1}W_1^{(j)}}  \\
    &+ \textcolor{orange}{\hat{A}^{K}X}\textcolor{Blu}{\prod_{h=0}^{K-1} W_1^{(h)}}.
    \label{eq:spatial_dl_2}
\end{aligned}
\end{equation}

By this decomposition, this GCN with skip-connections consists of $T=K+1$ feature subspaces.
It forms each feature subspace as $\Phi_t=\hat{A}^tX$.
For the first $T-1$ subspaces, the according respective parameters is denoted as $\Theta_{t;t<T-1}=\alpha^{(K-1-t)}W_0^{(K-1-t)}\prod_{j=K-t}^{T-1}W_1^{(t)}$, and for for the last $\Phi_T$, the parameter is $\Theta_T=\prod_{h=0}^{T-1} W_1^{(h)}$.
Please refer to Appendix~\ref{app:derive_equ6} for the derivation.

\textbf{Spectral GNNs (polynomial-based).}\quad
This type is specified by the explicit formula as \eqref{eq:spectral_gnn}. We remove the activation function, and obtain the linear approximation of a $K$-order spectral GNNs as:
\begin{align}
    H^{(K)} = \sum_{k=0}^K \textcolor{orange}{P_k(\hat{L})X} \textcolor{Blu}{\gamma^{(k)}W^{(0)}}.
    \label{eq:spectral_dl}
\end{align}
We put the learnable polynomial coefficient $\gamma^{(k)}$ together with the parameter matrices. 
We also combine the shared parametric matrices in \eqref{eq:spectral_gnn} as $W^{(0)}=W_1W_2$. 
In this way, \eqref{eq:spectral_dl} forms $T=K+1$ feature subspaces, each denoted as $\Phi_t=P_t(\hat{L})X$, and the parameters used to reweight the respective subspaces are $\Theta_t=W^{(0)}W^{(1)}$. See Appendix~\ref{app:deriv_bern} for further derivations. 

In Table~\ref{tab:gnn_elements_big}, we summarize typical instances of spatial and spectral methods, using different colors to distinguish the feature space $\Phi$ (orange) and the parameters $\Theta$ (blue). 
It shows that the proposed extracted view~\eqref{eq:linear} can support most methods in both spatial and spectral domains. 

\begin{figure}
    \centering %\vspace{-0.3cm}
    \includegraphics[width=\columnwidth]{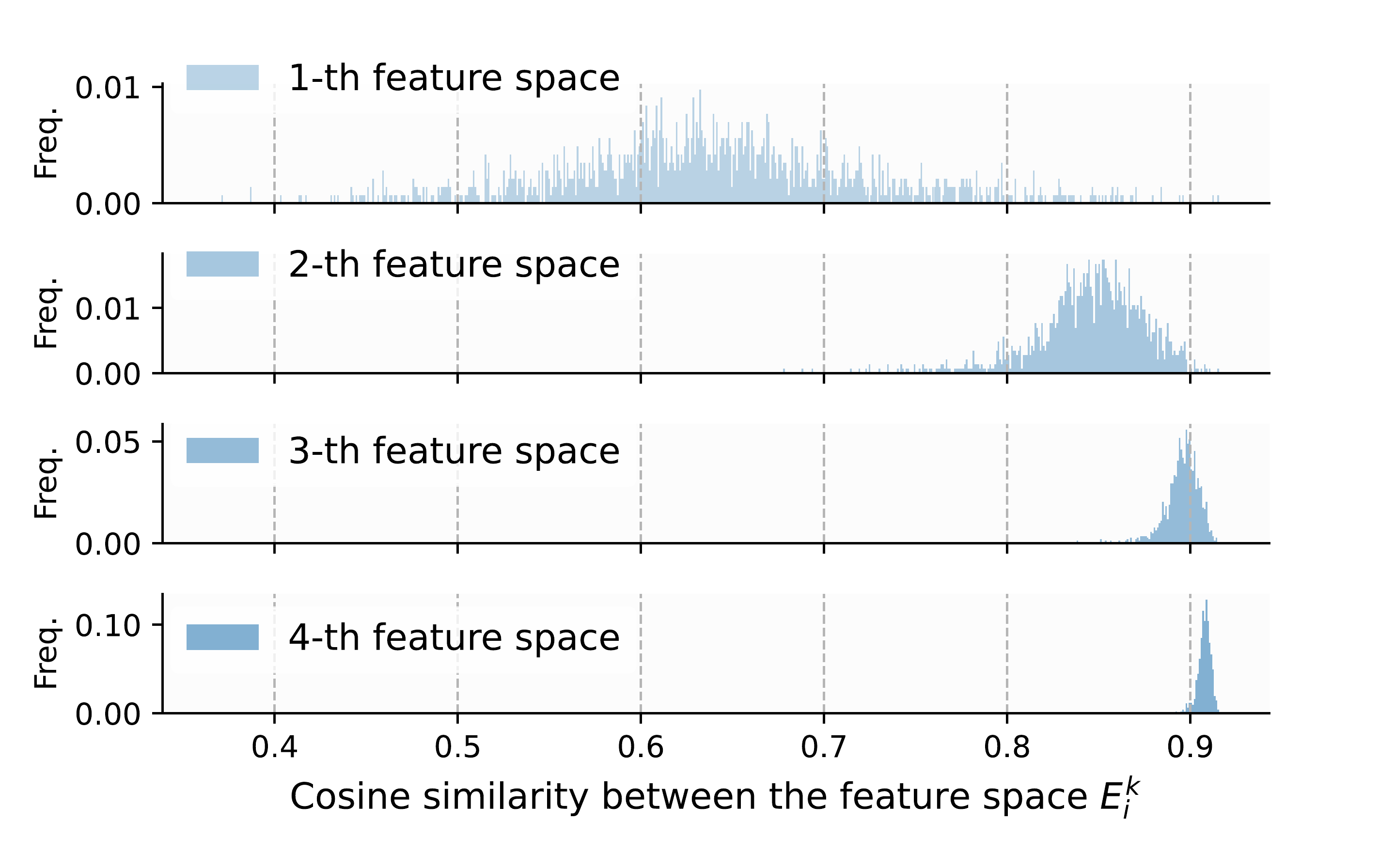}%\vspace{-1cm}
    \caption{Distribution of the mutual correlation values between the later feature (sub)spaces to the previous total ones.}%\vspace{-0.6cm}
    \label{fig:sim_sub}
\end{figure}

\subsection{Analysis of the Feature Space}

In this section, we first give a theoretical argument that the feature subspace of current GNNs obeys asymptotically linear correlation (see Proposition~\ref{theo:1.1}).
Then, we find that the current weight-sharing mechanism weakens the expressiveness of the feature space when the strict linear correlation is satisfied (see Theorem~\ref{theo:1.2}).
In the remainder, we analyze the case where the feature subspaces do not obey strict linear correlation.
We find that the feature space is insufficient when the dimensionality of the node attributes is limited and no assumptions can be made about the feature construction (e.g., heterophily).

Table~\ref{tab:gnn_elements_big} shows that the feature space $\Phi$ in spatial and spectral GNNs is formulated by the multiplication of graph structure matrices function and node attributes, e.g. $\Phi_k=P_k(\hat{L})X$. 
We consider $\hat{A}^kX$ to be the basic element for each feature subspace since other forms such as $P_k(\hat{L})X$ and $\hat{L}$ are all linear transformations of $\hat{A}$. 
Given this, it can be concluded that the feature subspaces of GNNs are sequentially appended as the spatial layers or the spectral orders of GNNs increase, with the latter subspace being the result of applying the aggregation function to the former.

This monotonous construction of the feature space will lead to a high linear correlation between each feature subspace as presented in Proposition~\ref{theo:1.1}. 

\begin{proposition} (its proof can be found in Appendix~\ref{app:theo1.1})
Suppose the feature subspaces are constructed sequentially by $\{\Phi_t=\hat{A}^tX\}_t$. 
As $i \in \mathbb{Z}$ increases, the subspace $\Phi_{t+i}$ gradually tends to be linearly correlated with $\Phi_{t}$.
\label{theo:1.1}
\end{proposition}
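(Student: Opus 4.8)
The plan is to analyze the spectral action of the renormalized adjacency matrix $\hat{A}$ under repeated powering, exploiting the identity $\Phi_{t+i} = \hat{A}^i \Phi_t$ so that the whole question reduces to understanding how $\hat{A}^i$ deforms a fixed matrix as $i$ grows. First I would use the symmetry of $\hat{A}$ to write its eigendecomposition $\hat{A} = \sum_{j=1}^n \lambda_j u_j u_j^T$ with eigenvalues ordered by magnitude $1 = \lambda_1 \geq |\lambda_2| \geq \cdots \geq |\lambda_n|$. The crucial structural fact is that the self-loop renormalization $\hat{A}=(D+I)^{-1/2}(A+I)(D+I)^{-1/2}$ forces the spectrum into $(-1,1]$ with a simple dominant eigenvalue $\lambda_1=1$ (for a connected graph) and a strict spectral gap $|\lambda_2| < 1$. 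This gap is what drives the entire phenomenon.

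Next I would expand each column $X^{(c)}$ of the node-attribute matrix in the eigenbasis, $X^{(c)} = \sum_j a_j^{(c)} u_j$, so that $\Phi_t^{(c)} = \hat{A}^t X^{(c)} = a_1^{(c)} u_1 + \sum_{j\geq 2} \lambda_j^t a_j^{(c)} u_j$. The residual tail, being supported on $\mathrm{span}(u_2,\ldots,u_n)$ and orthogonal to $u_1$, has norm bounded by $|\lambda_2|^t\|X^{(c)}\|$ and therefore decays geometrically. I would make this the quantitative heart of the argument: writing $\Phi_t^{(c)} = a_1^{(c)} u_1 + r_t^{(c)}$ with $\|r_t^{(c)}\| \le |\lambda_2|^{t}\|X^{(c)}\|$, I would compute the cosine similarity between $\Phi_t^{(c)}$ and $\Phi_{t+i}^{(c)}$. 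Because $u_1 \perp r_t^{(c)}, r_{t+i}^{(c)}$, the cross terms vanish and the similarity equals $(|a_1^{(c)}|^2 + \langle r_t^{(c)}, r_{t+i}^{(c)}\rangle)/(\|\Phi_t^{(c)}\|\,\|\Phi_{t+i}^{(c)}\|)$, which converges to $1$ as the residuals shrink. Increasing $i$ only shrinks $\|r_{t+i}^{(c)}\|$ further, so every column of $\Phi_{t+i}$ collapses toward the single direction $u_1$ and becomes collinear with the corresponding column of $\Phi_t$; equivalently, the principal angle between the column spaces tends to zero.

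The main obstacle I anticipate is not the decay estimate itself but pinning down the right formalization of \emph{linearly correlated} and handling the degenerate corners of the spectral picture. If the top eigenvalue is not simple (e.g. a disconnected graph giving a multi-dimensional dominant eigenspace), the limiting direction is a subspace rather than a line, and I would have to argue that both $\Phi_t$ and $\Phi_{t+i}$ collapse into this same dominant eigenspace so that their mutual correlation still saturates. I would also need to dispose of the edge case $a_1^{(c)}=0$, where a column of $X$ is orthogonal to $u_1$; then the relevant dominant term is the surviving eigenvector of largest magnitude, and the same geometric-decay argument applies with $\lambda_2$ replaced by the next distinct eigenvalue. Finally, to match the empirical mutual correlation reported in Figure~\ref{fig:sim_sub}, I would phrase the conclusion in terms of the correlation between $\Phi_{t+i}$ and the span of all earlier subspaces $\{\Phi_0,\ldots,\Phi_{t+i-1}\}$, observing that this span already contains an increasingly good approximation to $u_1$, so each newly appended subspace contributes vanishing fresh directions.
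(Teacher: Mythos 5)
Your proof is correct and rests on the same underlying spectral fact as the paper's---powering $\hat{A}$ annihilates the eigencomponents with $|\lambda_j|<1$---but you formalize the conclusion by a genuinely different route. The paper's proof sets up the regression $\Phi_t W = \Phi_{t+i}$, changes basis to $\Lambda^t U^T X\, W = \Lambda^{t+i} U^T X$, and argues qualitatively that as $i$ grows more rows of the right-hand side vanish, so the corresponding equations degenerate to $(\Lambda^t U^T X)_{\mathcal{C},\cdot}W\approx 0$ and the remaining system is ``less overdetermined'' and hence easier to satisfy; it never invokes a spectral gap, simplicity of the top eigenvalue, or convergence to any particular direction. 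You instead prove column-wise collinearity: each column of $\Phi_{t+i}$ collapses onto the dominant eigenspace at rate $|\lambda_2|^{t+i}$, so the cosine between corresponding columns of $\Phi_t$ and $\Phi_{t+i}$ tends to $1$, which in the limit yields a (diagonal) $W$ witnessing linear correlation in the paper's sense. Your route buys an explicit geometric rate and connects directly to the mutual-coherence statistic $E^k_i$ plotted in Figure~\ref{fig:sim_sub}, at the price of extra hypotheses (simple dominant eigenvalue, $a_1^{(c)}\neq 0$) that you correctly flag and dispose of; the paper's version is weaker but assumption-free, matching the deliberately soft phrasing ``gradually tends to be linearly correlated.'' One remark: what you actually establish is that $\Phi_{t+i}$ becomes approximately rank one (all columns align with $u_1$), which is the over-smoothing statement the paper is careful to distinguish from its column-wise correlation claim in the related-work section---a stronger conclusion than needed, but one that certainly implies the proposition.
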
%\vspace{-0.3cm}

To better understand the property of Proposition~\ref{theo:1.1}, we provide a quantitative demonstration using the feature space of GPRGNN~\citep{chien2021gprgnn} as an example. We measure the linear correlation of the appended $k$-th feature (sub)space with the previous ones by calculating the mutual correlation values:
\begin{align}
    E^k_i = \max_{j=0,\cdots,k-1} \mu(\hat{L}^jX, \hat{L}^kX_{\cdot i}),
\end{align} 
where $i$ is the index of the column in $\hat{L}^kX$, and $\mu(M_0,M_1)=\max_{d_u\in M_0,d_v\in M_1}\cos(d_u,d_v)$ is the mutual-coherence of two matrices, based on the cosine distance $\cos$. 
In Figure~\ref{fig:sim_sub}, we visualize the distribution of $\{E^k_i\}$ of all the columns with $k=1,2,3,4$. It confirms that the linear correlation improves significantly with the number of subspaces.
Therefore, both theoretical discussions and visualizations show a trend of increasingly linear correlations between feature subspaces with an increasing number of GNN layers/orders.

Since we are dealing with a gradually linear correlation, in the following we identify two questions about the current feature construction when this condition is strictly fulfilled or not, respectively.

\textbf{Issue 1: Constraint from the weight sharing mechanism.}\quad
From Table~\ref{tab:gnn_elements_big} we see that existing GNNs usually share parameter weights between different subspaces.
Under the condition of linear correlation, we provide a direct argument that using the weight-sharing method limits the expressiveness of the feature space.
% In the following, we provide the direct argument that the expressive power of linearly dependent feature subspaces, i.e. the benefit from redundancy, is limited by weight-sharing approaches.
\begin{theorem}(its proof can be found in Appendix~\ref{app:theo1.2})
\textcolor{black}{
Suppose $\Phi_a,\Phi_b\in\mathbb{R}^{n\times d}$ are two linearly correlated feature subspaces, i.e. there exists $W_a\in\mathbb{R}^{d \times d}$ such that $\Phi_a W_a = \Phi_b$, and suppose a matrix $B\in \mathbb{R}^{n\times c}$, $c << d$.
If $B$ can be represented by using both subspaces with a common weight $W_B$, i.e., $\gamma_a\Phi_a W_{B}+ \gamma_b \Phi_b W_{B}= B$ and $\gamma_a, \gamma_b \in \mathbb{R}$, 
then $B$ can always be represented by only one subspace $\Phi_a$, i.e., $\Phi_a W_{B}'= B$ and $W_B'\in \mathbb{R}^{d\times c}$.}
\label{theo:1.2}
\end{theorem}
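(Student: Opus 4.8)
The plan is to prove the statement by a direct algebraic reduction that constructs the single-subspace weight $W_B'$ explicitly from the given data, rather than by any existence or dimension-counting argument. The starting point is the two-subspace representation hypothesis $\gamma_a \Phi_a W_B + \gamma_b \Phi_b W_B = B$. Since $\Phi_a$ and $\Phi_b$ are assumed linearly correlated, I can invoke the relation $\Phi_b = \Phi_a W_a$ supplied in the hypothesis and substitute it into the second term, yielding $\gamma_a \Phi_a W_B + \gamma_b \Phi_a W_a W_B = B$. The essential move is then to factor $\Phi_a$ out on the left, collapsing the sum into $\Phi_a\lb \gamma_a W_B + \gamma_b W_a W_B \rb = B$.

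From here I would define $W_B' = \lb \gamma_a I + \gamma_b W_a \rb W_B$ and verify that it has the required shape. Because $W_a \in \mathbb{R}^{d \times d}$ and $W_B \in \mathbb{R}^{d \times c}$, the factor $\gamma_a I + \gamma_b W_a$ lies in $\mathbb{R}^{d \times d}$ and its product with $W_B$ lies in $\mathbb{R}^{d \times c}$, which is exactly the shape claimed for $W_B'$. Substituting this definition back gives $\Phi_a W_B' = B$, which is the desired single-subspace representation, completing the argument.

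The one point worth confirming carefully — though it turns out to impose no obstacle — is that the reduction is \emph{unconditional}: the closed form for $W_B'$ uses only matrix addition and multiplication, with no inverse of $W_a$ and no rank or invertibility assumption, so the collapse holds for arbitrary $\gamma_a, \gamma_b$ and arbitrary correlating matrix $W_a$. Consequently there is no genuine hard step here; the result is essentially a one-line consequence of the linear-correlation relation $\Phi_b = \Phi_a W_a$. I would note that the hypothesis $c \ll d$ plays no logical role in the identity itself and serves only to underscore why such a collapse constitutes a meaningful loss of expressive power in the weight-sharing setting, motivating the feature-subspace flattening proposal that decouples the weights across subspaces.
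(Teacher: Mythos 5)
Your proof is correct and follows essentially the same route as the paper's: substitute $\Phi_b = \Phi_a W_a$, factor out $\Phi_a$, and read off $W_B' = (\gamma_a I + \gamma_b W_a)W_B \in \mathbb{R}^{d\times c}$. In fact your explicit dimension check fixes a small slip in the paper's appendix, which writes $W_B' = \Phi_a(\gamma_a I + \gamma_b W_a)W_B$ (an $n\times c$ matrix equal to $B$ itself) where the leading $\Phi_a$ clearly should not appear.
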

It shows an expressiveness bound of the feature subspaces when they are linearly correlated.
While this linearly dependency can be used as redundancy and is widely used in areas such as dictionary learning~\cite{elad2010sparse}.
In particular, an over-determined linear system can be relaxed to an under-determined one when linearly correlated columns are added to the regressor matrix, making it easier to optimize.
However, the condition of this benefit is not met in existing GNNs due to the weight-sharing mechanism. 
In the next section, we propose a modification to break this constraint.

\textbf{Issue 2: Constraint from limited dimensionality of node attributes.}\quad
Proposition~\ref{theo:1.1} clarifies the tendency of linear correlation between the feature subspaces, but in the first few, this property may not be strictly satisfied. 
It makes the weight-sharing not so flawed and also weakens the effectiveness of the corresponding modification.
In the following, we give a discussion of the condition that the feature spaces are not necessarily linearly correlated, and we consider two limiting scenarios of the dimension $d$ of the node attributes, since the respective discussions are strongly orthogonal.

First, if $d\rightarrow n$, $X$ itself is a sufficient feature space if $X$ is sparse (e.g., bag-of-words features) or each column is linearly independent.
Besides, the weight-sharing method sums different feature subspaces by learnable weights $\gamma$ for each. 
As a result, the optimized sum promotes more of the subspaces that are closer to the labels.
Therefore, the core is the assumptions of subspace construction (e.g., homophilic assumptions and hyperparameter search for aggregation functions) and more flexible polynomial functions (e.g., Chebysheb and Bernstein). 
They all have been widely studied, where the performance of weight-sharing methods is convincing, such as~\cite{wang2022powerful}.

On the other hand, when $d << n$, the node attributes $X$ have a thin shape, making the regression system strictly over-determined.
In addition, without any assumption about the feature space construction, there is hardly an exact solution, \textcolor{black}{even when using linearly correlated copies to perform the expansion (which we will discuss in the next section).}
Therefore, compared to homophilic graphs, heterophilic graphs are a more severe case, especially since there is hardly any assumption about the feature space construction.
For this situation, we propose the other modification below.%\vspace{-0.3cm}

\section{Proposed Method}
Here, we first propose \textit{1) feature subspace flattening} for the detrimental condition caused by the weight-sharing mechanism given by Theorem~\ref{theo:1.2} and Proposition~\ref{theo:1.1}.
Then, to compensate for the second case when the dimensionality of the node attributes is limited, we propose \textit{2) structural principal components} to expand the original feature space.%\vspace{-0.2cm}

\subsection{Modification 1: Feature Subspaces Flattening}
\textcolor{black}{
For the first problem, we encourage each feature subspace to be independently expressed of each other in the model and weight them separately.}
Before giving supporting proof, we provide an illustration of this modification in Figure~\ref{fig:fagnn}. The benefit of this proposal is given in the following.

\begin{theorem}(its proof can be found in Appendix~\ref{app:theo2.1})
    Suppose $\Phi_a,\Phi_b\in\mathbb{R}^{n\times d}$ are two linearly correlated feature subspaces, i.e., there exits $W_a\in\mathbb{R}^{d \times d}$ such that $\Phi_a W_a = \Phi_b$, and a matrix $B\in \mathtt{R}^{d\times c}$, $c << d$.
    If $B$ can be expressed by $\Phi_a$, i.e., $\Phi_a W_{B}= B$,
    then using both subspaces $\Phi_a$ and $\Phi_b$ independently, i.e., $\Phi_a W_{a} + \Phi_b W_{b} = B$, the optimum is more easily achieved than a weight-sharing style.
    \label{theo:2.1}
\end{theorem}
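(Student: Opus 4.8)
The plan is to measure ``more easily achieved'' not by the attainable loss---both schemes reach zero, since $B=\Phi_a W_B$ is feasible by hypothesis---but by the dimension of the set of global minimizers, the idea being that a larger, flatter manifold of optima is reached more readily by first-order optimization. First I would substitute the correlation relation $\Phi_b=\Phi_a W_a$ into both parametrizations to expose the \emph{effective} weight that acts on $\Phi_a$; I write $W_a',W_b'$ for the learnable weights, reserving $W_a$ for the fixed correlation matrix. The weight-sharing ansatz $\gamma_a\Phi_a W+\gamma_b\Phi_b W$ then has effective weight $(\gamma_a I+\gamma_b W_a)W$, and by Theorem~\ref{theo:1.2} anything representable this way is already representable by $\Phi_a$ alone, so the weight-sharing problem is equivalent to the single-subspace system $\Phi_a W=B$. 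In the thin regime $d\ll n$, where $\Phi_a$ generically has full column rank $d$, this system is over-determined with a unique effective weight $W=W_B$. The independent ansatz instead has effective weight $W_a'+W_a W_b'$, in which $W_a'$ alone already ranges over all of $\mathbb{R}^{d\times c}$.

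Next I would count the dimension of the zero-loss set in each case. Writing the independent problem columnwise as $[\Phi_a\,|\,\Phi_b]\,[W_a';\,W_b']=B$, the augmented regressor $[\Phi_a\,|\,\Phi_b]\in\mathbb{R}^{n\times 2d}$ shares the column space of $\Phi_a$, since every column of $\Phi_b=\Phi_a W_a$ is a combination of columns of $\Phi_a$; hence its rank stays at $\operatorname{rank}(\Phi_a)\le d$, and the kernel of the map $(W_a',W_b')\mapsto \Phi_a W_a'+\Phi_b W_b'$ has dimension $c\,(2d-\operatorname{rank}\Phi_a)$, i.e.\ $cd$ when $\Phi_a$ has full column rank. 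Because $B$ is feasible, the solution set is then a $cd$-dimensional affine subspace: for every $W_b'$ one may take $W_a'=W_B-W_a W_b'$. By contrast the weight-sharing solution set, being equivalent to $\Phi_a W=B$, forces $W=W_B$ and leaves only the two scalars $\gamma_a,\gamma_b$ free, a set of dimension at most $2$. This is exactly the relaxation of an over-determined system into an under-determined one promised in the discussion after Theorem~\ref{theo:1.2}: the redundant copy $\Phi_b$, wasted under weight sharing, now supplies a full $cd$-dimensional flat of global optima.

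Finally I would turn the dimension count into the optimization statement by arguing that a zero-loss set that is a $cd$-dimensional affine subspace (a flat valley) is reached from a generic initialization more readily than an isolated point, since the target the iterates must hit is thickened in $cd$ directions; on the quadratic residual $\|\Phi_a W_a'+\Phi_b W_b'-B\|_F^2$ gradient flow converges to this manifold without the exactness constraint that the single point imposes. The main obstacle I anticipate is precisely this last step: ``more easily achieved'' is a claim about optimization dynamics rather than pure linear algebra, so I must fix a defensible surrogate---the dimension of the minimizer set, equivalently the nullity of the augmented normal operator---and argue that it governs ease of convergence. I would also state the full-column-rank (thin-regime) assumption on $\Phi_a$ explicitly, since if $\Phi_a$ is already rank-deficient the single-subspace solution set is itself nontrivial and the advantage of the independent scheme narrows correspondingly.
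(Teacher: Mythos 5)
Your proposal follows essentially the same route as the paper's proof: substitute $\Phi_b=\Phi_a W_a$ to reduce the weight-sharing scheme to the (nearly) determined system $(\gamma_a I+\gamma_b W_a)W_B'=W_B$ and the independent scheme to the underdetermined system $W_B^a+W_aW_B^b=W_B$, then conclude that the latter's larger solution set makes the optimum easier to reach. Your version is somewhat more careful than the paper's---you count the dimension of the minimizer set explicitly, state the full-column-rank assumption on $\Phi_a$, and flag that ``more easily achieved'' requires a surrogate linking nullity to optimization ease---but the underlying argument is identical.
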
 
It follows from this theorem that feature subspace flattening is more effective than weight-sharng methods when the feature subspaces tend to be linearly correlated. For further discussion on parameter matrices that are stacked (e.g., GCN), please refer to Appendix~\ref{app:stack}, where the same conclusion is maintained.

% To this, we present to utilize all the feature subspaces...
% to construct an under-determined linear system, with 

\begin{figure}[ht]
% \vskip 0.2in
\begin{center}
\centerline{\includegraphics[width=\columnwidth, trim={0cm 0cm 0 2.2cm}, clip]{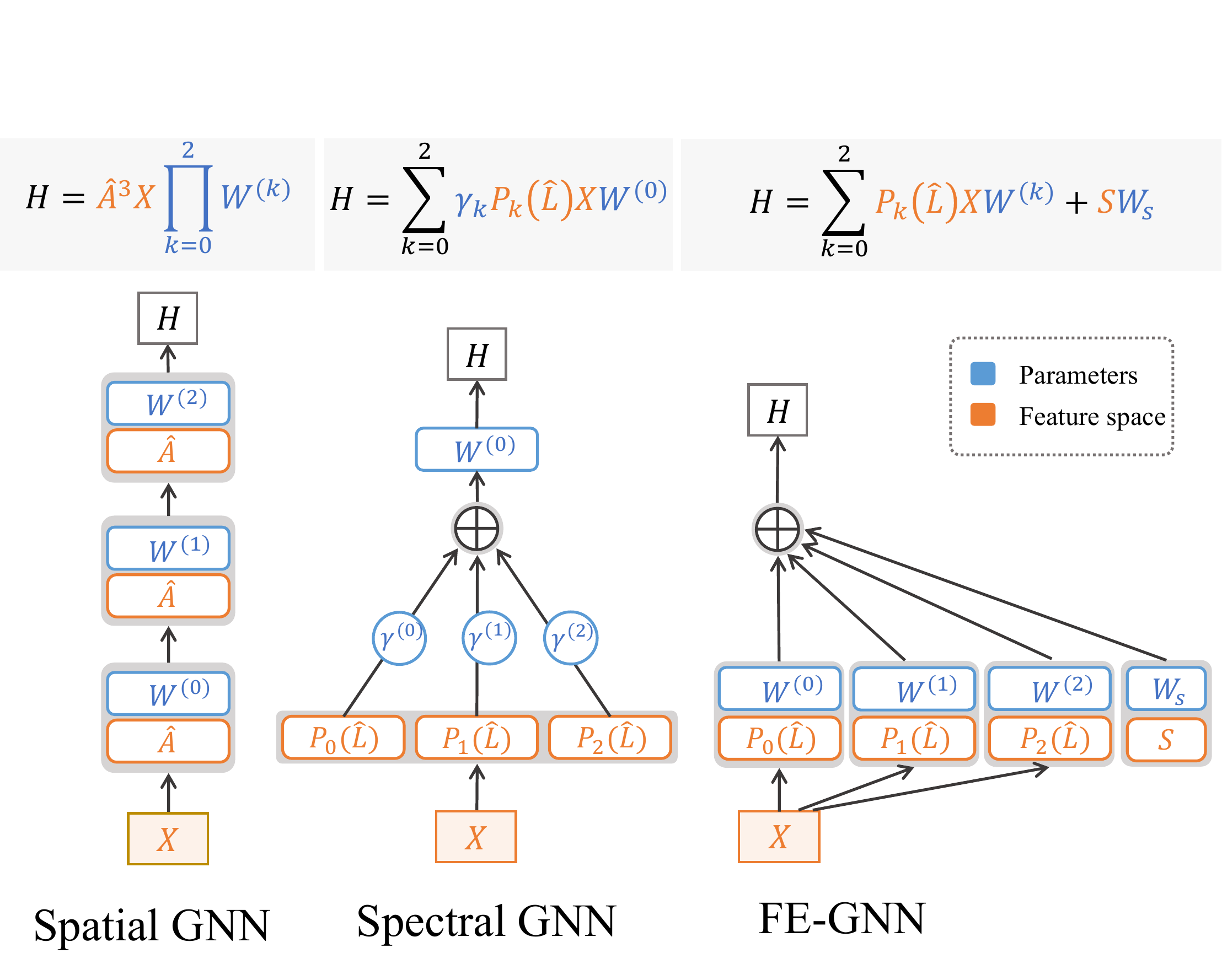}}%\vspace{-0.5cm}
\caption{Architecture of our proposal}%\vspace{-0.8cm}
\label{fig:fagnn}
\end{center}
% \vskip -0.2in
\end{figure}

\subsection{Modification 2: Structural Principal Components}
Next, we consider the second issue that the dimension of the node attributes is limited. 
We propose to expand the whole feature space by introducing other feature subspaces.

\begin{table*}[t]%\vspace{-0.5cm}
\caption{Overall performance of FE-GNN in node classification}
    \centering
    \vspace{0.1cm}
    \begin{adjustbox}{width=\textwidth}
    \begin{tabular}{ccccccccaaa}
    \toprule
        \multirow{2}{*}{Type} &  \multirow{2}{*}{Baseline}  &  \multirow{2}{*}{Time~(ms)}  & \multicolumn{5}{c}{Homophilic graphs} & \multicolumn{3}{c}{\cellcolor{Gray}Heterophilic graphs}\\
        \cmidrule{4-8} \cmidrule{9-11}
        &&& Cora & CiteSeer & PubMed & Computers & Photo & Squirrel & Chameleon & Actor\\
        \midrule
        \multirow{6}{*}{Spatial} & MLP &-&  76.70$_{\pm\text{0.15}}$ & 76.67$_{\pm\text{0.26}}$ & 85.11$_{\pm\text{0.26}}$ & 82.62$_{\pm\text{0.21}}$ & 84.16$_{\pm\text{0.13}}$ & 37.86$_{\pm\text{0.39}}$ & 57.83$_{\pm\text{0.31}}$ & 38.99$_{\pm\text{0.17}}$ \\
        & GCN &17.42$_{\pm\text{1.64}}$& 87.69$_{\pm\text{0.40}}$ & 79.31$_{\pm\text{0.46}}$ & 86.71$_{\pm\text{0.18}}$ & 83.24$_{\pm\text{0.11}}$ & 88.61$_{\pm\text{0.36}}$ & 47.21$_{\pm\text{0.59}}$ & 61.85$_{\pm\text{0.38}}$ & 28.61$_{\pm\text{0.39}}$ \\
        & GAT &18.06$_{\pm\text{1.18}}$& 88.07$_{\pm\text{0.41}}$ & 80.80$_{\pm\text{0.26}}$ & 86.69$_{\pm\text{0.14}}$ & 82.86$_{\pm\text{0.35}}$ & 90.84$_{\pm\text{0.32}}$ & 33.40$_{\pm\text{0.16}}$ & 51.82$_{\pm\text{1.33}}$ & 33.48$_{\pm\text{0.35}}$ \\
        & GraphSAGE &10.72$_{\pm\text{0.25}}$& 87.74$_{\pm\text{0.41}}$ & 79.20$_{\pm\text{0.42}}$ & 87.65$_{\pm\text{0.14}}$ & 87.38$_{\pm\text{0.15}}$ & 93.59$_{\pm\text{0.13}}$ & 48.15$_{\pm\text{0.45}}$ & 62.45$_{\pm\text{0.48}}$ &  36.39$_{\pm\text{0.35}}$ \\
        % GIN & \\
        & GCNII &8.48$_{\pm\text{0.24}}$& 87.46$_{\pm\text{0.31}}$ & 80.76$_{\pm\text{0.30}}$ & 88.82$_{\pm\text{0.08}}$ & 84.75$_{\pm\text{0.22}}$ & 93.21$_{\pm\text{0.25}}$ & 43.28$_{\pm\text{0.35}}$ & 61.80$_{\pm\text{0.44}}$ & 38.61$_{\pm\text{0.26}}$ \\
        & APPNP &23.74$_{\pm\text{2.08}}$& 87.92$_{\pm\text{0.20}}$ & 81.42$_{\pm\text{0.26}}$ & 88.16$_{\pm\text{0.14}}$ & 85.88$_{\pm\text{0.13}}$ & 90.40$_{\pm\text{0.34}}$ & 39.63$_{\pm\text{0.49}}$ & 59.01$_{\pm\text{0.48}}$ & 39.90$_{\pm\text{0.25}}$ \\
        \midrule
        \multirow{3}{*}{Spectral} & ChebyNet &20.26$_{\pm\text{1.03}}$& 87.17$_{\pm\text{0.19}}$ & 77.97$_{\pm\text{0.36}}$ & 89.04$_{\pm\text{0.08}}$ & 87.92$_{\pm\text{0.13}}$ & 94.58$_{\pm\text{0.11}}$ & 44.55$_{\pm\text{0.28}}$ & 64.06$_{\pm\text{0.47}}$ & 25.55$_{\pm\text{1.67}}$\\
        & GPRGNN &23.55$_{\pm\text{1.26}}$& 87.97$_{\pm\text{0.24}}$ & 78.57$_{\pm\text{0.31}}$ & 89.11$_{\pm\text{0.08}}$ & 86.07$_{\pm\text{0.14}}$ & 93.99$_{\pm\text{0.11}}$ & 43.66$_{\pm\text{0.22}}$ & 63.67$_{\pm\text{0.34}}$ & 36.93$_{\pm\text{0.26}}$ \\
        & BernNet &36.88$_{\pm\text{0.84}}$& 87.66$_{\pm\text{0.26}}$ & 79.34$_{\pm\text{0.32}}$ & 89.33$_{\pm\text{0.07}}$ & 88.66$_{\pm\text{0.08}}$ & 94.03$_{\pm\text{0.08}}$ & 44.57$_{\pm\text{0.33}}$ & 63.07$_{\pm\text{0.43}}$ & 36.89$_{\pm\text{0.30}}$\\
        \midrule
        \multirow{5}{*}{Unified} & GNN-LF &52.77$_{\pm\text{4.50}}$& 88.12$_{\pm\text{0.06}}$ & {\bf83.66$_{\pm\text{0.06}}$} & 87.79$_{\pm\text{0.05}}$ & 87.63$_{\pm\text{0.05}}$ & 93.79$_{\pm\text{0.06}}$ & 39.03$_{\pm\text{0.08}}$ & 59.84$_{\pm\text{0.09}}$ & 41.97$_{\pm\text{0.06}}$\\
        & GNN-HF &53.28$_{\pm\text{4.51}}$& 88.47$_{\pm\text{0.09}}$ & 83.56$_{\pm\text{0.10}}$ & 87.83$_{\pm\text{0.10}}$ & 86.94$_{\pm\text{0.06}}$ & 93.89$_{\pm\text{0.10}}$ & 39.01$_{\pm\text{0.51}}$ & 63.90$_{\pm\text{0.11}}$ & {\bf42.47$_{\pm\text{0.07}}$}\\
        & ADA-UGNN &14.36$_{\pm\text{0.21}}$& 88.92$_{\pm\text{0.11}}$ & 79.34$_{\pm\text{0.09}}$ & 90.08$_{\pm\text{0.05}}$ & 89.56$_{\pm\text{0.09}}$ & 94.66$_{\pm\text{0.07}}$ & 44.58$_{\pm\text{0.16}}$ & 59.25$_{\pm\text{0.16}}$ & 41.38$_{\pm\text{0.12}}$\\
        \cmidrule{2-11}
        & {\bf FE-GNN~(C)} &15.8$_{\pm\text{0.11}}$& {\bf89.45$_{\pm\text{0.22}}$} & 81.96$_{\pm\text{0.23}}$ & \textbf{90.27$_{\pm\text{0.49}}$} & {\bf90.79$_{\pm\text{0.08}}$} & 95.36$_{\pm\text{0.14}}$ & 67.82$_{\pm\text{0.26}}$ & {\bf73.33$_{\pm\text{0.35}}$} & 40.54$_{\pm\text{0.15}}$ \\
        & {\bf FE-GNN~(M)} &14.6$_{\pm\text{0.32}}$& 89.09$_{\pm\text{0.22}}$ & 81.76$_{\pm\text{0.23}}$ & 89.93$_{\pm\text{0.23}}$ & 90.60$_{\pm\text{0.11}}$ & {\bf95.45$_{\pm\text{0.15}}$} & {\bf67.90$_{\pm\text{0.23}}$} & 73.26$_{\pm\text{0.38}}$ & 40.91$_{\pm\text{0.22}}$ \\
        % GCF~(1) & \\
        % GCF~(10) & \\
        \bottomrule
        % w/o atom & 86.23$_{\pm\text{1.43}}$ & 79.32$_{\pm\text{0.59}}$ & {\bf90.27$_{\pm\text{0.49}}$} & 89.43$_{\pm\text{0.40}}$ & 94.94$_{\pm\text{0.56}}$ & 64.70$_{\pm\text{1.10}}$ & 68.25$_{\pm\text{1.64}}$ & 37.46$_{\pm\text{0.61}}$ \\
        % w/o $S$ & 89.20$_{\pm\text{0.93}}$ & 81.95$_{\pm\text{0.87}}$ & 89.76$_{\pm\text{0.46}}$ & 89.10$_{\pm\text{0.43}}$ & 94.56$_{\pm\text{0.55}}$ & 43.21$_{\pm\text{0.99}}$ & 61.54$_{\pm\text{1.52}}$ & 40.89$_{\pm\text{0.43}}$ \\
        % w/o $P_k$ & 71.10$_{\pm\text{1.72}}$ & 74.38$_{\pm\text{1.01}}$ & 86.61$_{\pm\text{0.54}}$ & 89.58$_{\pm\text{0.40}}$ & 94.90$_{\pm\text{0.32}}$ & 67.90$_{\pm\text{0.96}}$ & 73.35$_{\pm\text{1.21}}$ & 38.44$_{\pm\text{0.72}}$ \\
        % w/o $P_0$ & 84.70$_{\pm\text{1.05}}$ & 58.60$_{\pm\text{2.19}}$ & 85.84$_{\pm\text{0.45}}$ & 90.02$_{\pm\text{0.23}}$ & 92.92$_{\pm\text{0.52}}$ & 65.75$_{\pm\text{0.63}}$ & 72.61$_{\pm\text{1.60}}$ & 25.89$_{\pm\text{3.44}}$ \\
        % \bottomrule
    \end{tabular}
    \end{adjustbox}
    \label{tab:fegnn_overall}
\end{table*}
% 22.89$_{\pm\text{4.26}}$ 是加了atom归一化的作用， 在citeseer数据集上x的影响特别明显。

There are two criteria that we consider. 
One is that the introduced feature subspace should be highly uncorrelated with the existing feature subspace, otherwise, according to the analysis in the previous section, it may not be the same as the previously proposed modification under this condition.
The other is that the dimension of the introduced subspace should not be too large, otherwise noise might be included and the computational cost would also increase.

Considering this, for graph-structured data, there are two data matrices, i.e., node attributes and graph structure matrices, and the former has been explicitly exploited as one of the feature subspaces in most GNN models, as summarized in Table~\ref{tab:gnn_elements_big}. 
On the contrary, structure matrices are used only in combination with node attributes.
Given these conditions, we propose to construct the expansion subspace using the truncated SVD of the structural matrix, called \textit{structural principal components} as modification 2.
It naturally provides orthogonal subspaces, and the truncated form limits the dimension of the matrices. Thus, two criteria are met.
Specifically, we extract the low-dimensional information for the graph structure to obtain its principal components: 
\begin{align}
    {S = \tilde{Q}\tilde{V}};\hat{A}=QVR^T,
    \label{eq:fegnn_svd}
\end{align}
where $\tilde{Q},\tilde{V}$ are the principal components and the corresponding singular values. 
Besides, we prove the effectiveness of this modification in the following theorem.

\begin{theorem}(its proof can be found in Appendix~\ref{app:theo2.2})
\textcolor{black}{
Suppose the dimensionality of the node attributes is much smaller than the number of nodes, i.e., $d<< n, X\in \mathbb{R}^{n\times d}$, and a $z$-truncated SVD of $\hat{L}$, which satisfies $||U_zS_z-\hat{L}||_2 < \epsilon$, where $\epsilon$ is a sufficiently small constant. 
Then the linear system $(\Phi_k,U_zS_z) W_B' = B$ can achieve a miner error than the linear system $\Phi_k W_B = B$.}
\label{theo:2.2}
\end{theorem}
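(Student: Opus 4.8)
The plan is to recast both linear systems as least-squares regression problems and to compare their minimal residuals through orthogonal projections. Write $M = (\Phi_k, U_zS_z) \in \mathbb{R}^{n\times(d+z)}$ for the augmented regressor. Since the optimal weights minimize $\lVert \Phi_k W_B - B\rVert_F$ and $\lVert M W_B' - B\rVert_F$ respectively, the attained errors are $e_1 = \lVert(I - P_{\Phi_k})B\rVert_F$ and $e_2 = \lVert(I - P_{M})B\rVert_F$, where $P_{\Phi_k}$ and $P_{M}$ are the orthogonal projectors onto the respective column spaces. The structural observation that drives everything is the inclusion $\mathrm{col}(\Phi_k) \subseteq \mathrm{col}(M)$, which holds by construction because $M$ is obtained by appending columns to $\Phi_k$.

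First I would establish the weak inequality $e_2 \le e_1$. Because $\mathrm{col}(\Phi_k)\subseteq\mathrm{col}(M)$, the difference $P_{M}-P_{\Phi_k}$ is itself the orthogonal projector onto the complementary subspace $\mathcal{C} := \mathrm{col}(M)\ominus\mathrm{col}(\Phi_k)$. Applying the Pythagorean identity to the decomposition $B = P_{\Phi_k}B + (P_{M}-P_{\Phi_k})B + (I-P_{M})B$ yields $e_1^2 = e_2^2 + \lVert(P_{M}-P_{\Phi_k})B\rVert_F^2$, so the residual can only shrink, and the exact gain equals the energy of $B$ captured by the newly added directions $\mathcal{C}$. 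This already proves the statement in its weak form; the substantive content is to show the gain is positive under the stated hypotheses.

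The main obstacle is therefore proving $\lVert(P_{M}-P_{\Phi_k})B\rVert_F>0$, which needs two ingredients. First, $U_zS_z$ must supply genuinely new directions so that $\dim\mathcal{C}>0$: here I invoke $d\ll n$, which forces $\mathrm{col}(\Phi_k)$ to be a low-dimensional ($\le d$) subspace of $\mathbb{R}^n$, together with the design criterion preceding this theorem that the structural principal components are chosen highly uncorrelated with $\Phi_k=\hat{A}^kX$. The approximation bound $\lVert U_zS_z-\hat{L}\rVert_2<\epsilon$ guarantees that the columns of $U_zS_z$ faithfully span the top structural directions of $\hat{L}$, which—by \Cref{theo:1.1} and the uncorrelatedness argument—are not already contained in the $d$-dimensional $\mathrm{col}(\Phi_k)$ once $\epsilon$ is small, so $\mathcal{C}$ is nontrivial. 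Second, I must argue that $B$ has nonzero projection onto $\mathcal{C}$; since $d\ll n$ makes $\mathrm{col}(\Phi_k)$ too small to contain the label matrix $B$ (the residual $(I-P_{\Phi_k})B$ is nonzero), and the new directions carry structural information that is correlated with this residual, enlarging the subspace along $\mathcal{C}$ strictly reduces the error. Making this last step precise—ruling out the degenerate case where $B$ is orthogonal to all of $\mathcal{C}$—is the delicate point, and I would handle it by using the small-$\epsilon$ bound to lower-bound the alignment between $\mathcal{C}$ and the structural content of $B$.
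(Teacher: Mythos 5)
Your argument is correct as far as it goes, but it takes a genuinely different route from the paper's. You use the standard nested-least-squares argument: since $\mathrm{col}(\Phi_k)\subseteq\mathrm{col}\bigl((\Phi_k,U_zS_z)\bigr)$, the optimal residual cannot increase, and by the Pythagorean decomposition the gain equals exactly the energy of $B$ captured by the newly added directions. The paper instead argues $B$-independently about the capacity of each regressor: writing the least-squares fit as $\hat{B}=U_mU_m^TB$ in terms of the left singular vectors of the regressor, it observes that $U_kU_k^T$ (built from only $d\ll n$ columns) is far from the identity, whereas for the augmented matrix the relevant projector $(U_k,U_z)(U_k,U_z)^T$ is dominated by the $U_zU_z^T$ block, which the $\epsilon$-approximation of the essentially full-rank $\hat L$ forces to be close to $I_n$; hence the augmented regressor nearly recovers any $B$. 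Your version is cleaner and fully rigorous on the weak inequality $e_2\le e_1$, a step the paper never isolates; the paper's version, if one accepts its heuristics (``we ignore the influence of $U_kU_z^T$'', ``$U_zU_z^T$ is likely to be identity''), buys the stronger quantitative conclusion that the augmented residual is uniformly small, and thereby sidesteps the strictness question. Neither argument rigorously excludes the degenerate case where $B$ is orthogonal to the complementary subspace $\mathcal{C}$ --- you flag this explicitly as the delicate point, while the paper buries it inside the claim that $U_zU_z^T$ approaches the identity --- so on that score your proposal is no weaker than the published proof, and its honest identification of where the hypothesis on $\epsilon$ must enter is a genuine improvement in clarity.
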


So far, we have introduced two modifications, and together they contribute to a new proposed GNN model called Feature Expanded Graph Neural Network~(FE-GNN). It is shown in Figure~\ref{fig:fagnn} and formulated as follows,
\begin{align}
    H = \sum_{k=0}^{K} \textcolor{orange}{P_k(\hat{L})X} \textcolor{Blu}{W^{(k)}} + \textcolor{orange}{S}\textcolor{Blu}{W_s}. 
    \label{eq:wgnn}
\end{align}
It constructs the feature space in two ways. 
The first part inherits the polynomial-based GNNs and takes advantage of the multiplication of the polynomials of the structural matrix $P_k(\hat{L})$ and the node attributes $X$. 
Second, we use the principal components of the structural matrices to form another feature subspace $S$.
In addition, FE-GNN uses independent parameter matrices $W^{(k)}$ and $W_s$ to reweight each feature subspace to provide flexible reweighting. 
% To sum up, WGNN forms the feature space of $T=K+J$ sub-spaces, denoted as $\Phi_k=P_k(\hat{L})X\in\mathbb{R}^{n\times d}$ similar to GNNs', and $\Phi_j=S_j\in\mathbb{R}^{n\times d_j}$, the additional part with arbitrary columns ones, which compose the total feature space $\Phi=\{\Phi_k\}_{k=0,1,\cdots,K-1}\cup\{\Phi_j\}_{j=0,1,\cdots,J-1}$. 
% Respectively, $\Theta_k\in\mathbb{R}^{d\times c}$ and $\Theta_j\in\mathbb{R}^{d_j\times c}$ re-weight them with respect to the objective.  

In particular, the $\Phi_k$ and $S$ feature spaces can have unbalanced scales and lead to poor reweighting.
Therefore, we add a column-wise normalization to ensure that each column of each feature subspace contributes equally to the whole feature space. 
Finally, to better verify the importance of the feature space view, our implementation is purely linear without any deep artifices such as activation functions or dropout except for the cross-entropy loss, while we obey the original implementation for the baselines and their non-linearity is preserved.%\vspace{-0.2cm}

\subsection{Discussion}
\label{sec:discussion}
% Comparing to existing views for GNNs representations (summarized in related work), 

Our analysis and proposal make few assumptions about the distribution of node attributes, the graph structure, or even their relationships. 
Since the point view of the feature space is not an analysis method proposed specifically for graph structure data or GNNs, our investigations are more general and jump out of the limitations of other views.
For example, graph denoising~\cite{zhu2021interpreting} and spectral graph theory view~\cite{balcilar2021bridge} both ignore the property of node attributes, which is the key for our second proposal, and instead focus on the transformations of the structure matrices.
We provide a more comprehensive comparison of related work in section~\ref{sec:relate}.%\vspace{-0.1cm}

In our proposed method, the first modification that flattens the feature subspace improves the effectiveness of each feature subspace, but the number of parameters must be higher because no parameter matrices are shared. In experiments, we surprisingly found relatively low training time costs compared to baselines.
Furthermore, the second modification can be misinterpreted as an arbitrary addition of structural information. With this in mind, we will conduct additional experiments with other information extraction methods.
Besides, in our approach, the aggregation process can be abstracted as preprocessing to further speed up the training process. We leave this as future work; in our experiments, aggregations are computed during each training session for a fair comparison. 

\textcolor{black}{Finally, it is worth noting that the linear approximation is adopted for the non-linear GNNs for the following reasons: 1) it allows us the convenience of a deeper view of the GNN models, 2) linearization is a quite normal setting in the theoretical analysis in general machine learning and deep learning analysis, since the non-linearity can hardly provide rigorous arguments~\cite{wu2019simplifying, wesley2021nonlinear}, and 3) the proposed model is fully consistent with the analytical view of linearity.}

\begin{figure}[t]
    \centering%\vspace{-0.3cm}
    % \begin{minipage}{0.25\textwidth}
    % \centering
    % % trim on left lower right upper.
    % \includegraphics[width=\textwidth]{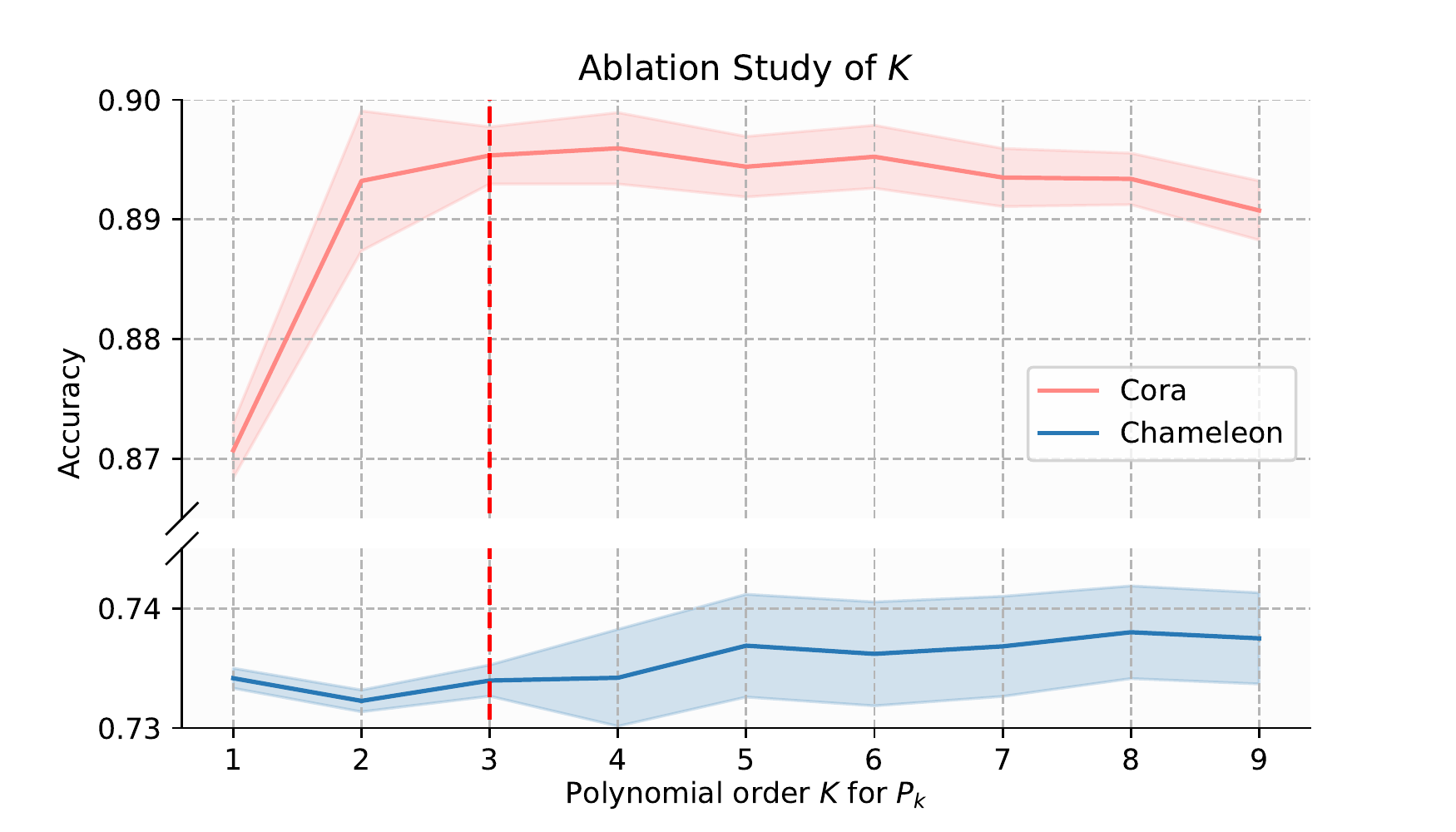}\vspace{-0.8cm}
    % \caption{\small Ablation study of  $K$}
    % \label{fig:fegnn_oversmoothing}
    % \end{minipage}
    % \begin{minipage}{0.25\textwidth}    
    % \centering
    % \includegraphics[width=\textwidth, trim={0 0cm 0 0.4cm}, clip]{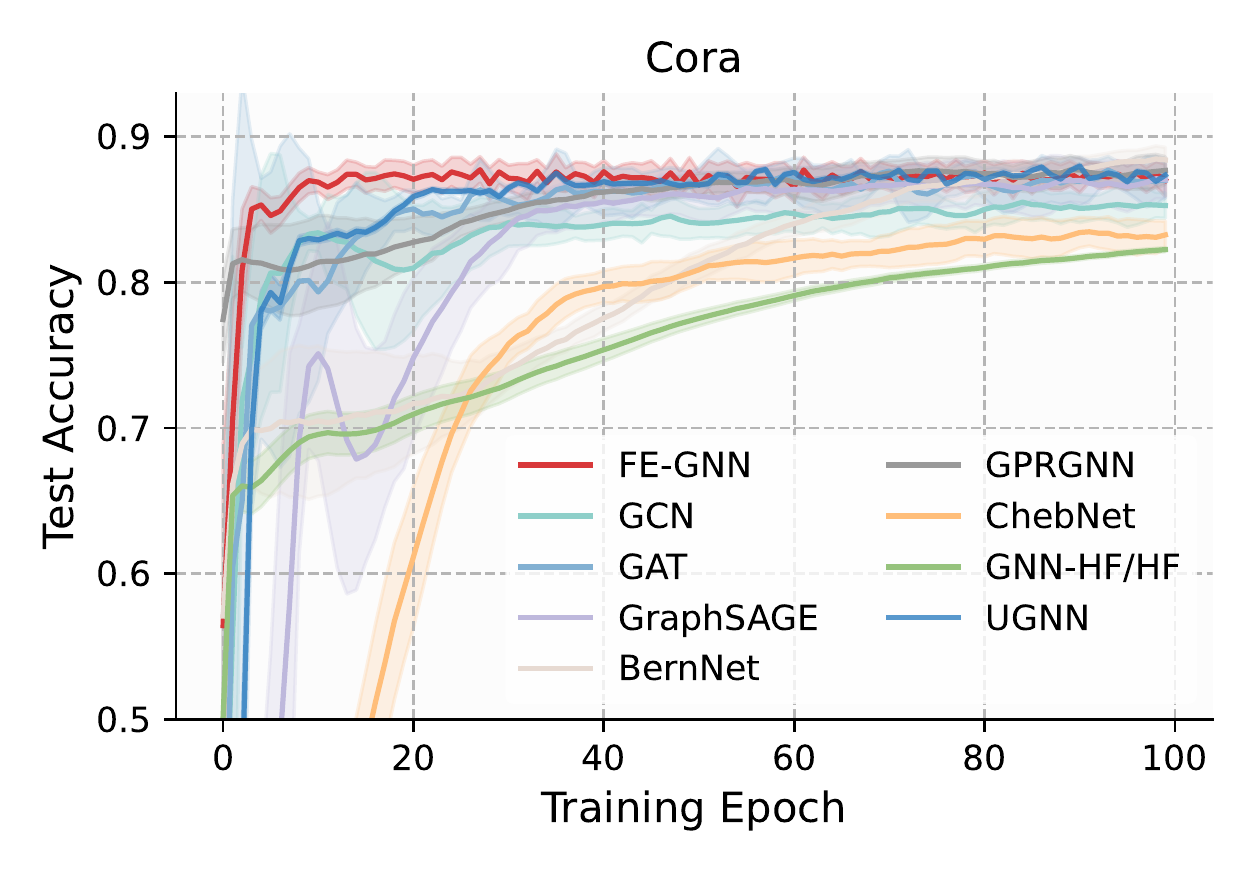}\vspace{-1cm}
    % \caption{\small Convergence curve}\vspace{-0.5cm}
    % \label{fig:fegnn_converge}
    % \end{minipage}
    \begin{minipage}{0.5\textwidth}    
    \centering
    \includegraphics[width=\textwidth]{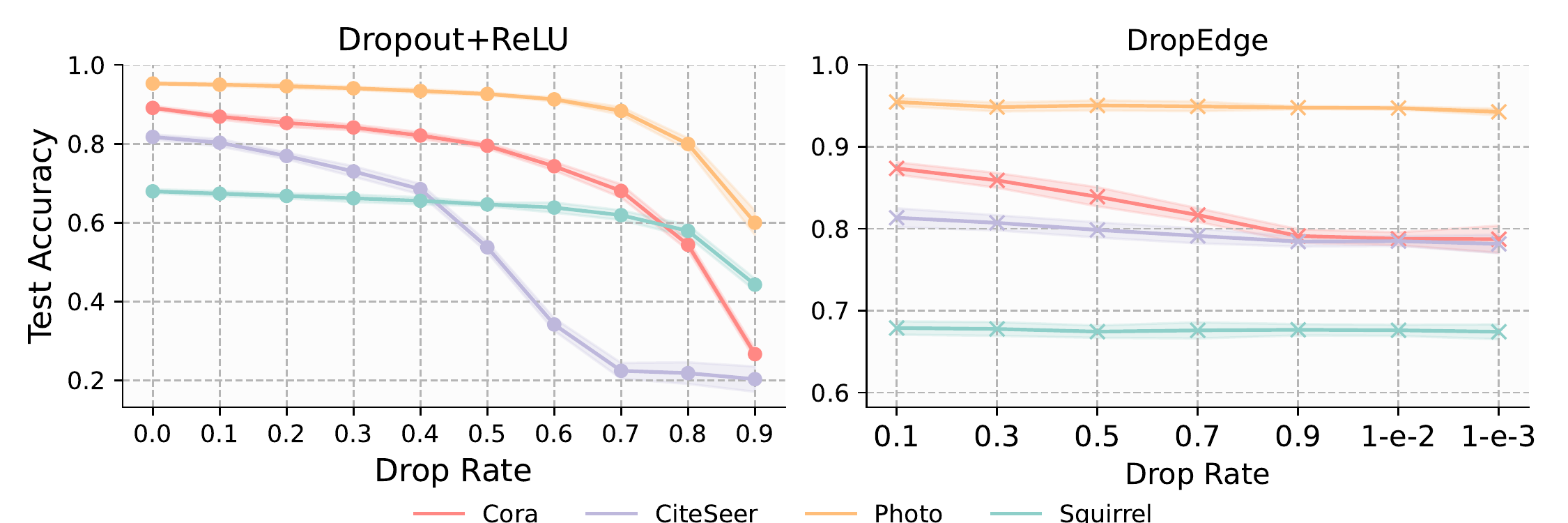}%\vspace{-0.8cm}
    \caption{\small Analysis of the deep artifices on FE-GNN.}
    \label{fig:fegnn_deep_artifices}
    \end{minipage}%\vspace{-0.5cm}
\end{figure}

\section{Other Related Work}
\label{sec:relate}
\textbf{Unified perspectives for GNNs.}\quad
There have been several perspectives for studying GNNs representations.
\citet{balcilar2021bridge} first bridge the spatial methods to the spectral ones, that they assign most of the spatial GNNs with their corresponding graph filters. 
More specifically, they begin with GNN models' convolution matrix and then summarize their frequency responses. 
% For example, GCN~\citep{kipf2016gcn} obtains the convolution matrix of $\tilde{D}^{-1/2}\tilde{A}\tilde{D}^{-1/2}$ leads to the filter of $\Phi_{GCN}(\lambda)\approx 1-\lambda \bar{p}/(\bar{p}-1)$., though they fail to explain the existing progress and issues in spectral view.
%
\citet{unifiedgnn_ma} regard the aggregation progress of GCN~\citep{kipf2016gcn}, GAT~\citep{velickovic2018gat}, and APPNP~\citep{klicpera2019appnp} as graph signal denoising problem, which aims to recover a clean signal from an objective and a certain denoising regularization term.
Given this, the authors consider generalize the smoothing regularization term to and propose ADA-UGNN.
However, it also ignores the property of node attributes but focusing on the flexibility of the denoising functions.
\citet{zhu2021interpreting} give a more comprehensive summary of GNNs from an optimization view, which partly overlaps with \cite{unifiedgnn_ma}'s opinions of graph signal denoising. 
They propose GNN-LF/HF with adjustable objective that behaves as a low/high-pass filter.%\vspace{-0.2cm} 

% \begin{figure}[t]
%     % \vspace{-0.3cm}
%     \makebox[\columnwidth][c]{
%     \begin{minipage}{0.5\columnwidth}
%     \centering
%     % trim on left lower right upper.
%     \includegraphics[width=\columnwidth]{icml2023/wgnn_icml23/fig/fegnn_oversmoothing.pdf}%\vspace{-0.9cm}
%     \caption{\small Ablation study of  $K$}%\vspace{-0.7cm}
%     \label{fig:fegnn_oversmoothing}
%     \end{minipage}
%     \begin{minipage}{0.5\columnwidth}%\vspace{-0.1cm}    
%     \centering
%     \includegraphics[width=\columnwidth, trim={0 0cm 0 0.4cm 0.cm}, clip]{icml2023/wgnn_icml23/fig/fegnn_convergence.pdf}\vspace{-0.6cm} 
%     \caption{\small Convergence curve}%\vspace{-0.72cm}
%     \label{fig:fegnn_converge}
%     \end{minipage}}
% \end{figure} 

\begin{figure}
    \centering
    \includegraphics[width=0.8\columnwidth]{}
    \caption{Ablation study of  $K$}
    \label{fig:fegnn_oversmoothing}
\end{figure}

\begin{figure}
    \centering
    \includegraphics[width=0.8\columnwidth, trim={0 0cm 0 0.4cm 0.cm}, clip]{}
    \caption{Convergence curve}
    \label{fig:fegnn_converge}
\end{figure}

\textbf{Trend of flattening feature subspaces.}\quad
In addition to these unified investigations of GNN representations, another line of related literature is the trend of flattening feature subspaces and the inclusive of graph structure matrices.
Initial connection~\cite{li2019deepgcns, chen2020simple} is a partial of our first modification that disentangles the dependency between the first two subspaces.
There also methods include graph structure information independently, such as LINK~\cite{lim2021large} includes the whole adjacency matrix directly bearing a high parameter consumption.
Distance encoding technique encodes the local position using graph structure matrices to supplementary node features~\cite{dwivedi2020benchmarking}, however, the point view of feature space is untouched.
\textcolor{black}{\citet{maurya2021simplifying} also adopt the independent weight for each feature subspace. However, they perform feature selection at the subspace level, while our proposal treats all columns of each subspace equally. Moreover, FE-GNN specifically treats the graph structure as an information source instead of carrying out message-passing. More empirical comparisons are shown in Table~\ref{tab:rebuttal_fsgnn} in Appendix~\ref{app:rebuttal_fsgnn}.}

Generally, there have been some scattered attempts to verify the extension perspective of the feature space in the existing work, but due to the lack of a proposal for this view, no more essential conclusions have been drawn. 

\textbf{Over-smoothing.}\quad
\textcolor{black}{Similar to the linear correlation between each feature subspace (see Proposition~\ref{theo:1.1}), the strong similarity between node embeddings has been studied extensively, e.g., in the over-smoothing problem~\cite{li2018deeper, huang2020tackling, cai2020note, sun2022clar}. However, our view is from a column-wise perspective (i.e., hidden dimension) and discusses the correlation between the columns of the feature subspaces, while over-smoothing investigations usually focus on the row-wise perspective (i.e., node dimension) and consider the similarity of the output representations of the nodes. They are obviously different, and a further study of these two should also be an interesting future work.}

%\vspace{-0.3cm}
% They attribute over-smoothing to the absence of original features and overcome this issue in their proposal; however, heterophily is untouched either. 

% In general, these integrated perspectives lack the explanation of the issues, but focus on general formulas. 
% \textcolor{red}{todo: general conclusion of these integrated views}
\section{Experiments}
\label{sec:experiments}
We evaluate FE-GNN\footnote{Our code is available at \url{https://github.com/sajqavril/Feature-Extension-Graph-Neural-Networks.git}} on: 
(1) node classification results,
(2) ablation studies, and
(3) efficiency check.%\vspace{-0.1cm}

% \begin{figure}[t]
%     \centering\vspace{-0.2cm}
%     % \begin{minipage}{0.32\textwidth}    
%     % \centering
%     % \includegraphics[width=\textwidth, trim={0 0cm 0 0.4cm}, clip]{icml2023/wgnn_icml23/fig/fegnn_convergence.pdf}\vspace{-1cm}
%     % \caption{\small Convergence curve}\vspace{-0.5cm}
%     % \label{fig:fegnn_converge}
%     % \end{minipage}
%     \begin{minipage}{0.9\columnwidth}
%     \centering
%     % trim on left lower right upper.
%     \includegraphics[width=\textwidth, trim={0 0.4cm 0 0.2cm}, clip]{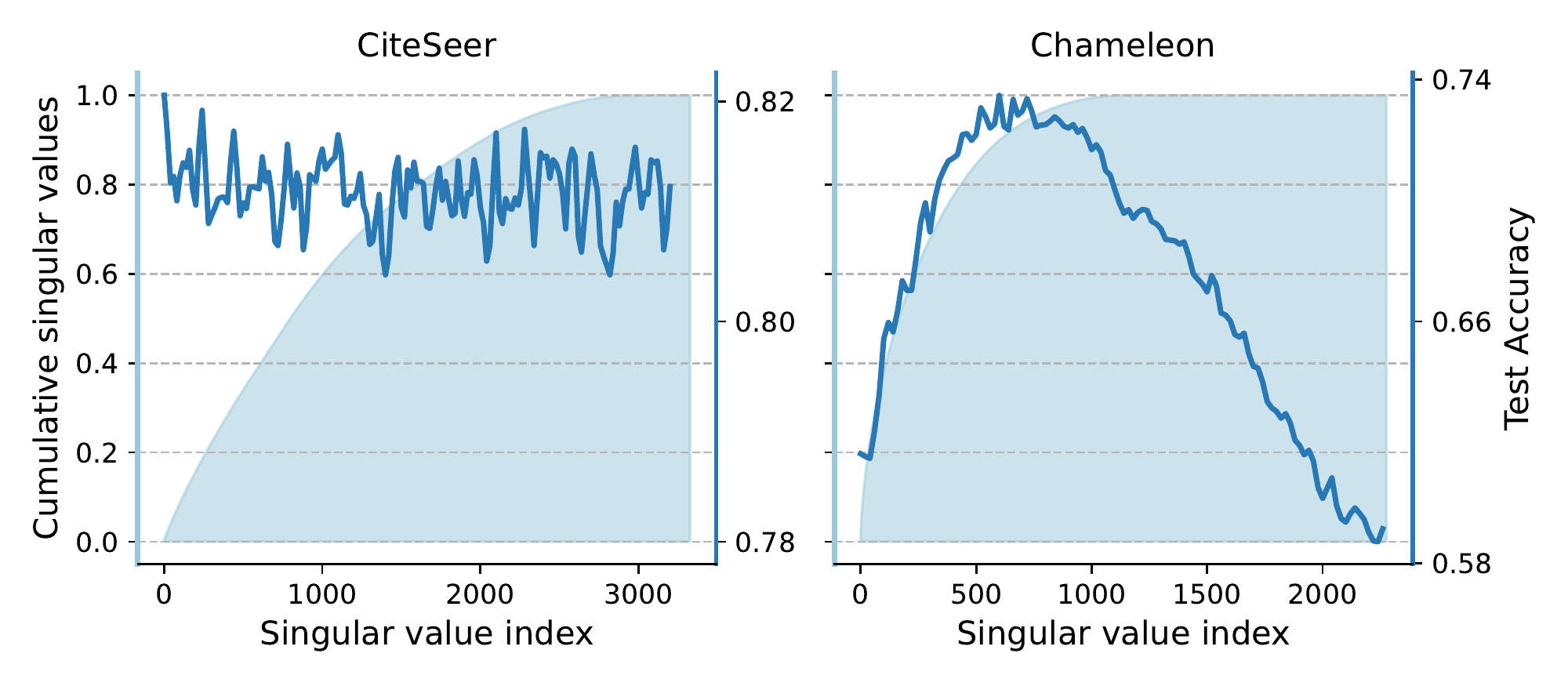}\vspace{-0.9cm}
%     \caption{\small
%     Sensitivity study of truncated SVD}\vspace{-0.6cm}
%     \label{fig:fegnn_svd}
%     \end{minipage}
% \end{figure} 
\textbf{Dataset.}\quad 
We implement our experiments on homophilic datasets, i.e., Cora, CiteSeer, PubMed, Computers, and Photo~\citep{yang2016revisiting, shchur2018pitfalls}, and heterophilic Chameleon, Squirrel, and Actor~\citep{rozemberczki2021multi, pei2020geom}. Among them, Chameleon and Squirrel are two cases that bear problem 2, i.e., limited dimension of node attributes and no homophilic assumptions to use.  
More details can be found in the appendix~\ref{app:settings}.%\vspace{-0.2cm}

% \begin{figure}[t]
%     \centering%\vspace{-0.2cm}
%     % \begin{minipage}{0.32\textwidth}    
%     % \centering
%     % \includegraphics[width=\textwidth, trim={0 0cm 0 0.4cm}, clip]{icml2023/wgnn_icml23/fig/fegnn_convergence.pdf}\vspace{-1cm}
%     % \caption{\small Convergence curve}\vspace{-0.5cm}
%     % \label{fig:fegnn_converge}
%     % \end{minipage}
%     \begin{minipage}{\columnwidth}
%     \centering
%     % trim on left lower right upper.
%     \includegraphics[width=\textwidth, trim={0.5cm 0.4cm 0.5cm 0.2cm}, clip]{icml2023/wgnn_icml23/fig/fegnn_svd.pdf}%\vspace{-0.9cm}
%     \caption{\small
%     Sensitivity study of truncated SVD}%\vspace{-0.6cm}
%     \label{fig:fegnn_svd}
%     \end{minipage}
% \end{figure} 

% \input{icml2023/wgnn_icml23/tab/tab_other_sj.tex}

\textbf{Baselines.}\quad 
We compare a list of state-of-the-art GNN methods.
For spatial GNNs we have GCN~\citep{kipf2016gcn}, GAT~\citep{velickovic2018gat}, GraphSAGE~\citep{hamilton2017inductive}, GCNII~\citep{chen2020simple} and APPNP~\citep{klicpera2019appnp}, where MLP is included as a special case.
For spectral methods we take ChebyNet~\citep{defferrard2016convolutional}, GPRGNN~\citep{chien2021gprgnn} and BernNet~\citep{he2021bernnet}.
We also cover the recent unified models, ADA-UGNN~\citep{unifiedgnn_ma} and GNN-LF/HF~\citep{zhu2021interpreting}.
FE-GNN uses the Chebyshev or monomial polynomials to construct the feature space, and we refer to the corresponding versions as FE-GNN~(C) and FE-GNN~(M), respectively.
Please refer to Appendix~\ref{appexp:hyper} for more details on the implementation.%\vspace{-0.1cm}

\subsection{Node Classification}
\label{sec: node classification}%\vspace{-0.2cm}
We test on the node classification task with random 60\%/20\%/20\% splits and summarize the results of 100 runs in Table~\ref{tab:fegnn_overall}, reporting the average accuracy with a 95\% confidence interval. 
We observe that FE-GNN has almost the best performance on homophilic graphs. In particular, compared to the current SoTA method ADA-UGNN~\citep{unifiedgnn_ma}, which unifies the objectives in both spatial and spectral domains, our FE-GNN achieves a $1.1\%$ accuracy improvement on average on 5 homophilic graph datasets. 
We attribute the superior performance of GNN-LF/HF on Citeseer to its complex hyperparameter tuning, where more fine-grained constraints of the parameters can be found, which we consider as future work in section~\ref{sec:conclusion}.
Furthermore, FE-GNN achieves on average ${32.0\%}$ improvement on three heterophilic graph datasets than the GCN baseline. 
It is worth highlighting the huge margin of FE-GNN over others on Chameleon and Squirrel, where they perfectly fit our assumption of the second modification, i.e., heterophilic and limited dimensionality of node attributes. The results of the ablation studies are also consistent with this.

\begin{table}[t]
    \centering\vspace{-0.2cm}
    \caption{Ablation study of 1) flattening feature subspaces}
    \begin{adjustbox}{width=0.95\columnwidth}
    \begin{tabular}{lccccc}
        \toprule
        & Cora & CiteSeer & Chameleon & Squirrel \\
        \midrule
         k=2 & $89.15 \pm_{0.86} $ & $81.97 \pm_{1.10} $ & $73.41 \pm_{1.40} $ & $67.37 \pm_{0.83}$  \\
         k=2 (WS)  & $87.21 \pm_{0.83} $ & $78.39 \pm_{0.72} $ & $72.94 \pm_{1.22} $ & $66.01 \pm_{1.31}$ \\ 
         k=4  & $88.56 \pm_{2.01} $ & $80.19 \pm_{0.84} $ & $73.27 \pm_{1.56} $ & $67.40 \pm_{0.90}$ \\
         k=4 (WS)  & $87.28 \pm_{1.38} $ & $77.72 \pm_{0.86} $ & $73.23 \pm_{1.72} $ & $66.43 \pm_{1.72}$ \\
         k=8  & $88.92 \pm_{0.88} $ & $81.11 \pm_{0.89} $ & $73.85 \pm_{1.52} $ & $67.93 \pm_{2.04}$ \\
         k=8 (WS)  & $86.92 \pm_{1.66} $ & $77.32 \pm_{0.33} $ & $73.15 \pm_{1.83} $ & $66.63 \pm_{2.38}$ \\
         k=16  & $88.26 \pm_{0.14} $ & $80.54 \pm_{1.03} $ & $73.88 \pm_{1.53} $ & $67.82 \pm_{1.54}$\\
         k=16 (WS)  & $87.34 \pm_{1.98} $ & $78.60 \pm_{0.70} $ & $72.94 \pm_{1.79} $ & $66.65 \pm_{2.15}$\\
        % \cellcolor{Gray}FE-GNN(C) & \cellcolor{Gray}{\bf89.45$_{\pm\text{0.22}}$} & \cellcolor{Gray}{\bf81.96$_{\pm\text{0.23}}$} & \cellcolor{Gray}89.87$_{\pm\text{0.49}}$ & \cellcolor{Gray}67.82$_{\pm\text{0.26}}$ & \cellcolor{Gray}{\bf73.33$_{\pm\text{0.35}}$}\\
        % \cellcolor{Gray}FE-GNN(M) & \cellcolor{Gray}89.09$_{\pm\text{0.22}}$ & \cellcolor{Gray}81.76$_{\pm\text{0.23}}$ & \cellcolor{Gray}89.93$_{\pm\text{0.23}}$ & \cellcolor{Gray}{\bf67.90$_{\pm\text{0.23}}$} & \cellcolor{Gray}73.26$_{\pm\text{0.38}}$ \\
        % w/o norm & 86.23$_{\pm\text{1.43}}$ & 79.32$_{\pm\text{0.59}}$ & {\bf90.27$_{\pm\text{0.49}}$} & 64.70$_{\pm\text{1.10}}$ & 68.25$_{\pm\text{1.64}}$ \\
        % w/o $S$ & 89.20$_{\pm\text{0.93}}$ & 81.95$_{\pm\text{0.87}}$ & 89.76$_{\pm\text{0.46}}$ & 43.21$_{\pm\text{0.99}}$ & 61.54$_{\pm\text{1.52}}$ \\
        % w/o $P_k(\hat{L})X_{k>0}$ & 71.10$_{\pm\text{1.72}}$ & 74.38$_{\pm\text{1.01}}$ & 86.61$_{\pm\text{0.54}}$ & 67.90$_{\pm\text{0.96}}$ & 73.35$_{\pm\text{1.21}}$ \\
        % w/o $P_k(\hat{L})X_{k=0}$ & 84.70$_{\pm\text{1.05}}$ & 58.60$_{\pm\text{2.19}}$ & 85.84$_{\pm\text{0.45}}$ & 65.75$_{\pm\text{0.63}}$ & 72.61$_{\pm\text{1.60}}$ \\
        \bottomrule
    \end{tabular}
    \end{adjustbox}%\vspace{-0.5cm}
    \label{tab:fegnn_share}
\end{table}

\begin{table*}[t]
    \centering %\vspace{-0.6cm}
     \caption{Time consumption of SVD}
    \begin{adjustbox}{width=0.8\textwidth}
    \begin{tabular}{cccccc}
    \toprule
            & Cora & CiteSeer & Chameleon & Squirrel & Actor \\
         \midrule
        Training time (ms) & $4000.01 \pm 52.23$ & $4103.46 \pm 133.77$ & $2818.21 \pm 81.87$ & $6096.43 \pm 403.95$ & $6074.39 \pm 547.34$ \\
        SVD time (ms) & $3.88 \pm 0.08$ & $8.76 \pm 0.05$ & $61.80 \pm 0.24$ & $432.43 \pm 0.70$ & $3.99 \pm 0.02$ \\
        \# of epochs & 252 & 252 & 252 & 271 & 252 \\
        \cellcolor{Gray}SVD time / Training time (\%) & \cellcolor{Gray}0.097 & \cellcolor{Gray}0.21 & \cellcolor{Gray}2.2 & \cellcolor{Gray}7.1 & \cellcolor{Gray}0.066\\
        \bottomrule
    \end{tabular}
    \end{adjustbox}%\vspace{-0.5cm}
    \label{tab:fegnn_svd_time}
\end{table*}

% \textcolor{red}{highlight: homo and hetero, why chameleon and squirrel improves such a large margin??}

%\vspace{-0.2cm}

\begin{table}[h]
    \centering%\vspace{-0.3cm}
    \caption{Ablation study of 2) structural principal components}
    \begin{adjustbox}{width=\columnwidth}
    \begin{tabular}{cccccc}
        \toprule
        & Cora & CiteSeer & PubMed & Squirrel & Chameleon \\
        \midrule
        \cellcolor{Gray}FE-GNN(C) & \cellcolor{Gray}{\bf89.45$_{\pm\text{0.22}}$} & \cellcolor{Gray}{\bf81.96$_{\pm\text{0.23}}$} & \cellcolor{Gray}89.87$_{\pm\text{0.49}}$ & \cellcolor{Gray}67.82$_{\pm\text{0.26}}$ & \cellcolor{Gray}{\bf73.33$_{\pm\text{0.35}}$}\\
        \cellcolor{Gray}FE-GNN(M) & \cellcolor{Gray}89.09$_{\pm\text{0.22}}$ & \cellcolor{Gray}81.76$_{\pm\text{0.23}}$ & \cellcolor{Gray}89.93$_{\pm\text{0.23}}$ & \cellcolor{Gray}{\bf67.90$_{\pm\text{0.23}}$} & \cellcolor{Gray}73.26$_{\pm\text{0.38}}$ \\
        w/o norm & 86.23$_{\pm\text{1.43}}$ & 79.32$_{\pm\text{0.59}}$ & {\bf90.27$_{\pm\text{0.49}}$} & 64.70$_{\pm\text{1.10}}$ & 68.25$_{\pm\text{1.64}}$ \\
        w/o $S$ & 89.20$_{\pm\text{0.93}}$ & 81.95$_{\pm\text{0.87}}$ & 89.76$_{\pm\text{0.46}}$ & 43.21$_{\pm\text{0.99}}$ & 61.54$_{\pm\text{1.52}}$ \\
        w/o $P_k(\hat{L})X_{k>0}$ & 71.10$_{\pm\text{1.72}}$ & 74.38$_{\pm\text{1.01}}$ & 86.61$_{\pm\text{0.54}}$ & 67.90$_{\pm\text{0.96}}$ & 73.35$_{\pm\text{1.21}}$ \\
        w/o $P_k(\hat{L})X_{k=0}$ & 84.70$_{\pm\text{1.05}}$ & 58.60$_{\pm\text{2.19}}$ & 85.84$_{\pm\text{0.45}}$ & 65.75$_{\pm\text{0.63}}$ & 72.61$_{\pm\text{1.60}}$ \\
        \bottomrule
    \end{tabular}
    \end{adjustbox}%\vspace{-0.6cm}
    \label{tab:fegnn_ablation}
\end{table}

\subsection{Ablation Studies}%\vspace{-0.2cm}
We study the contribution of different components and hyperparameter effects in FE-GNN. %\vspace{-0.1cm}

\textbf{Does the feature flattening work? Yes.
}\quad
In Table~\ref{tab:fegnn_share}, we compare our proposal with a weight-sharing~(WS) instance where the principal components are retained. 
It shows that over an increasing number of feature subspaces, the flattening feature subspace is always observed better performance, which verifies the discussion of theorems~\ref{theo:1.2} and ~\ref{theo:2.1}.

\textbf{When do structural principal components work? On limited node attributes and the heterophilic case.
}\quad
We evaluate FE-GNN on 5 datasets of both homophilic and heterophilic graphs in 3 different feature space constructions: including $w/o\ S$, $ w/o\ P_k(\hat{L})X_{k=0}$, and $ w/o\ P_k(\hat{L})X_{k>0}$, which denote the feature space constructions without graph structure, without note attributes, and without the combination of both.
In the ablation results of Table~\ref{tab:fegnn_ablation}, we found that $w/o\ S$ works well on homophilic graphs, but fails on 
heterophilic ones (with limited node attribute dimension), while the other two work inversely. 
Meanwhile, $w/\ S$ greatly improves the bounded cases, and these results confirm our original intention of proposing structural principal components.
In addition, we provide other variants in Appendix~\ref{app:other_sj} that include structural information as an extension of our discussion in Section~\ref{sec:discussion}, and row-wise normalization in Appendix~\ref{app:atom}, where our proposal is comparably effective.
% \textcolor{red}{add column-wise and remove the rest to appendix}

% \textcolor{red}{
% \textbf{Does the column-wise normalization matter? It matters when the node scale is not huge.}\quad 
% As shown in Table~\ref{tab:fegnn_ablation}, we find column-wise normalization works well in most cases, except for PubMed.
% It may be because the large node scale of PubMed causes the tiny value of normalized feature space. Please refer to Appendix~\ref{app:atom} for more discussions.}

\textbf{What proportion of the truncated SVD is appropriate?  94\%.}\quad
We use different ratios of singular vectors and values to construct $S$, i.e., the top $j$ singular values get $r=\sum_{j=1}^{i}V_{jj}/\sum_{j=1}^{n}V_{jj}$ ratio of components. 
In Figure~\ref{fig:fegnn_svd}, we show the accuracy of CiteSeer and Chameleon as the ratio of singular values increases, where CiteSeer is robust to variation in the ratio, while Chameleon performs best at $r=94\%$. So we use 94\% for further experiments.
We provide more SVD results in the Appendix~\ref{app:svd}.%\vspace{-0.2cm}

\textbf{At what order is a polynomial sufficient? Around three.}\quad %. 
We test a progressive order $K$ of polynomials on Cora and Chameleon as shown in Figure~\ref{fig:fegnn_oversmoothing}.
The performance in Cora increases from $1$ to $3$ and decreases slightly, while Chameleon shows only small changes.
This suggests that order 3 is good enough to achieve near-optimal performance, which is also an advance by flattening the feature subspaces compared to deep GNNs. A more comprehensive comparison can be found in Appendix~\ref{app:best_k}.

\textbf{Is linearity sufficient for constructing features? Yes.}\quad
We apply the nonlinear activation function to FE-GNN and some deep learning training techniques, including Dropout~\cite{agarap2018relu} and DropEdge~\cite{rong2020dropedge}, to verify the sufficiency of our linear implementation.
In Figure~\ref{fig:fegnn_deep_artifices}, we show the corresponding performance with different drop ratios on four datasets, where both show worse performance when increasing the drop rate. 
Therefore, our linear construction is sufficient for FE-GNN.
These results also argue for more attention to feature space construction to avoid over-application of deep artifices.%\vspace{-0.3cm}

\subsection{Efficiency Check}%\vspace{-0.1cm}
Finally, we examine the efficiency of our proposal, including the training time cost and the truncated SVD time.
In Table~\ref{tab:fegnn_overall}, we collect the training time per epoch~(ms) for each method, which shows that FE-GNN behaves at a comparable time cost to other baselines, \textcolor{black}{even though it bears more computational cost from the independent feature expression}. 
Note that the time we report includes graph propagation for a fair comparison, although FE-GNN can reduce it further by constructing the feature space in a preprocessing manner. 
In Figure~\ref{fig:fegnn_converge} we compare the convergence time for all methods and observe that FE-GNN consumes the minimum number of training epochs while achieving the highest accuracy\textcolor{black}{, which exactly reveals the conclusion of easier optimization progress from Theorem~\ref{theo:2.1}.}
And in table ~\ref{tab:fegnn_svd_time}, we show the training time and SVD time (as preprocessing), from which we find that the SVD time in the total training time is less than 10\%, which confirms the applicability. %\vspace{-0.2cm}

\begin{figure}[t]
    \centering%\vspace{-0.2cm}
    % \begin{minipage}{0.32\textwidth}    
    % \centering
    % \includegraphics[width=\textwidth, trim={0 0cm 0 0.4cm}, clip]{icml2023/wgnn_icml23/fig/fegnn_convergence.pdf}\vspace{-1cm}
    % \caption{\small Convergence curve}\vspace{-0.5cm}
    % \label{fig:fegnn_converge}
    % \end{minipage}
    \begin{minipage}{\columnwidth}
    \centering
    % trim on left lower right upper.
    \includegraphics[width=\textwidth, trim={0.5cm 0.4cm 0.5cm 0.2cm}, clip]{}%\vspace{-0.9cm}
    \caption{\small
    Sensitivity study of truncated SVD}\vspace{-0.3cm}
    \label{fig:fegnn_svd}
    \end{minipage}
\end{figure} 

\section{Conclusion}%\vspace{-0.2cm}
\label{sec:conclusion}
In this paper, we provide the feature space view to analyze GNNs, which separates the feature space and the parameters. 
Together, we provide a theoretical analysis of the existing feature space of GNNs and summarize two issues.
We propose 1) feature subspace flattening and 2) structural principal components for these issues.
Extensive experimental results verify their superiority. %\vspace{-0.2cm} 

\textbf{Limitations}\quad 
Nonlinear cases are not included in our work and will be considered in future work.
Also, the correlation between the subspaces should be studied more carefully beyond the linear correlation property; in a sense, the parameters can be further reduced by introducing reasonable constraints.
Finally, more feature space construction methods should be discovered for future work.
\section*{Acknowledgements}
This work was partly supported by the National Key Research and Development Program of China (No. 2020YFB1708200) ,  the "Graph Neural Network Project" of Ping An Technology (Shenzhen) Co., Ltd. and AMiner.Shenzhen SciBrain fund.
This work was also supported in part by the NSF-Convergence Accelerator Track-D award \#2134901, by the National Institutes of Health (NIH) under Contract R01HL159805, by grants from Apple Inc., KDDI Research, Quris AI, and IBT, and by generous gifts from Amazon, Microsoft Research, and Salesforce.

\newpage
\bibliography{icml2023/wgnn_icml23/bib}
\bibliographystyle{icml2023}

%%%%%%%%%%%%%%%%%%%%%%%%%%%%%%%%%%%%%%%%%%%%%%%%%%%%%%%%%%%%%%%%%%%%%%%%%%%%%%%
%%%%%%%%%%%%%%%%%%%%%%%%%%%%%%%%%%%%%%%%%%%%%%%%%%%%%%%%%%%%%%%%%%%%%%%%%%%%%%%
% APPENDIX
%%%%%%%%%%%%%%%%%%%%%%%%%%%%%%%%%%%%%%%%%%%%%%%%%%%%%%%%%%%%%%%%%%%%%%%%%%%%%%%
%%%%%%%%%%%%%%%%%%%%%%%%%%%%%%%%%%%%%%%%%%%%%%%%%%%%%%%%%%%%%%%%%%%%%%%%%%%%%%%
\newpage
\appendix
\onecolumn
\section{Derivations and Proofs}
\subsection{Derivation of equation~\eqref{eq:spatial_dl_2}}
\label{app:derive_equ6}
Iterate \eqref{eq:gin} from $H^{(0)}=X$, we have
\begin{align}
    H^{(0)} &= X \\
    H^{(1)} &= \alpha^{(0)}XW_0^{(0)} + \hat{A}XW_1^{(0)} \\ 
    H^{(2)} &= \alpha^{(1)}XW_0^{(1)} + \hat{A}\alpha^{(0)}XW_0^{(0)}W_1^{(1)} + \hat{A}^2XW_1^{(0)}W_1^{(1)} \\ 
    H^{(3)} &= \alpha^{(2)}XW_0^{(2)} + \hat{A}\alpha^{(1)}XW_0^{(1)}W_1^{(2)} \\
            &+ \hat{A}^2\alpha^{(0)}XW_0^{(0)}W_1^{(1)}W_1^{(2)} + \hat{A}^3XW_1^{(0)}W_1^{(1)}W_1^{(2)} \\
    \cdots
\end{align}
Identify the rule of the iteration, we obtain 
\begin{align}
    H^{(k)}=\sum_{i=0}^{l-1}\delta_i^{(k)} + \hat{A}^l X \prod_{h=0}^{l-1}W_1^{(h)},
    \label{eq:apd_spatial_dl_2}
\end{align}
where $\delta_i^{(k)}$ \textcolor{black}{is} calculate by:
\begin{align}
    \delta_i^{(k)} = \alpha^{(k-1-i)} \hat{A}^{i} X W_0^{(k-1-i)} \prod_{j=l-i}^{l-1}W_1^{(j)}.
    \label{eq:apd_spatial_dl_2_delta}
\end{align}
We apply \eqref{eq:apd_spatial_dl_2_delta} on \eqref{eq:apd_spatial_dl_2} and put $\alpha^{(k-1-i)}$ back to the learnable parameters $W_0^{(k-1-i)}$, and we thus have \eqref{eq:spatial_dl_2}.

\subsection{Decomposition of BernNet}
\label{app:deriv_bern}
\textbf{BernNet.}\quad
Different from GPRGNN that utilizes Monomial \textcolor{black}{polynomial}, each term of Bernstein polynomial contains another polynomial, such that:
\begin{align}
    P_k(\hat{L}) &:= \frac{1}{2^K}  \tbinom{K}{k}  (2I-\hat{L})^{K-k} \hat{L}^k \\
    &= \frac{1}{2^K}  \tbinom{K}{k} \sum_{i=0}^{K-k} 2^i (-1)^{K-k-i} \tbinom{K-k}{i} \hat{L}^{K-i} 
\end{align}
This formulation shows that each element of the BernNet, $P_k$, contains a $k$ to $K$-ordered sub-polynomial of $\hat{L}$, where $K$ is the order of a given BernNet. 
Then we merge the same-ordered element in each $P_k$, resulting in feature subspaces $\Phi_t$, for which each term contains the components from $P_0$ to $P_t$. 
\begin{align}
    \Phi_t &= \sum_{j=0}^{t} \frac{1}{2^K}  \tbinom{K}{j} 2^{K-j} \hat{L}^{t} \\
    &= \sum_{j=0}^{t} \frac{1}{2^j}  \tbinom{K}{j}  \hat{L}^{t}
\end{align}

And the corresponding parameter matrix is $\Theta_t = \sum_{j=0}^{t} \gamma^{(j)} W_1W_2$.

\textbf{ChebyNet.}\quad 
Its derivation is almost the same as that for BernNet in feature space. Due to the complex structure of the ChebShev polynomials, we omit this calculation and present a substitute to represent each term in Table~\ref{tab:gnn_elements_big}, noted in the footnote.
Nevertheless, the rules for the construction of feature spaces hold, namely theorem ~\ref{theo:1.1}, and only different forms of weight distribution are applied to current GNNs in the context of this paper.

\subsection{Proofs for Proposition~\ref{theo:1.1}}
\label{app:theo1.1}
First we define the linear correlation of two matrices $M_1,M_2 \in \mathbb{R}^{n\times d}$.
\begin{definition}
If there exists a weight matrix $W \in \mathbb{R}^{d\times d}$ such that $||M_1W - M_2||_2 \rightarrow 0$, we consider $M_2$ to be linearly correlated with $M_1$.
\end{definition}
Then we consider this to be a linear regression problem, i.e. $M_1 W = M_2$. In this expression, each column of $W$ independently returns each column of $M_2$. Without loss of generality, we take an arbitrary column as an example to give the proof.

Suppose $x\in\mathbb{R}^{n\times 1}$ is an arbitrary column of $W$ and $b\in\mathbb{R}^{n\times 1}$ is the corresponding column of $M_2$ to recover: $M_1 x = b$, forming an overdetermined linear system. 
It has no exact solution for a perfect recovery if no assumption is made about $b$, e.g. $b\in \mathrm{Span}(B)$. 
However, its minimum error can be minimized if more entries of $b \rightarrow 0$, since it must have the solution of $\mathbf{0}$ for the corresponding part of $x$.

Then we transfer this case to the two subspaces, i.e. $\Phi_t=U\Lambda^tU^TX$ and $\Phi_{i+t}=U\Lambda^{t+i}U^TX$, and we consider the linear regression problem:
\begin{align}
    \Phi_t W &= \Phi_{t+i} \\
    U\Lambda^tU^TX W &= U\Lambda^{t+i}U^TX\\
    U^TU\Lambda^tU^TX W &= U^TU\Lambda^{t+i}U^TX\\
    \Lambda^tU^TX W &= \Lambda^{t+i}U^TX,
\end{align}
where $\Lambda_{ii}\in[-1,1]$.
In the limiting condition, as $i$ increases, more elements of $\Lambda_{t+i}$ approach $0$, then the corresponding part of the regression problem will be: $(\Lambda^tU^TX)_{\mathcal{C},\cdot} W = 0$ as $W\rightarrow 0$ is required as l2-norm regularization for the rest of the linear regression.
Then, as $i$ increases, more equation constraints can be relaxed as a regularization, leaving a less overdetermined part of the linear system.
Moreover, this can be further relaxed by increasing $t$, because it directly removes more equations from the system. In other words, the linear correlation $W$ is more likely to be obtained. End of proof.

\subsection{Proof for Theorem~\ref{theo:1.2}}
\label{app:theo1.2}

% \begin{theorem*}(Proofs can be found in Appendix~\ref{app:theo1.2})
% % \textcolor{red}{[translate it into math expressions, suppose linearly dependent two subspaces, if there exists weights matrix that applied on the linear combination of these two subspaces, then the precise result can be achieved using any one subspace of them using another weight matrix.]}
% Suppose $\Phi_a,\Phi_b\in\mathbb{R}^{n\times d}$ are two linearly correlated feature subspaces, i.e., there exits $W_a\in\mathbb{R}^{d \times d}$ such that $\Phi_a W_a = \Phi_b$, and suppose a matrix $B\in \mathtt{R}^{d\times c}$, $c << d$.
% If $B$ can be represented by using both subspaces with a shared weight $W_B$, i.e., $\gamma_a\Phi_a W_a W_{B}+ \gamma_b \Phi_b W_{B}= B$ and $\gamma_a, \gamma_b \in \mathtt{R}$, 
% then $B$ can always be represented by only one subspace $\Phi_a$, i.e., $\Phi_a W_{B}'= B$ and $W_B'\in \mathtt{R}^{d\times c}$.
% \label{theo:1.2}
% \end{theorem*}

This is quite straightforward. It uses some variations of linear algebra.

Given $\gamma_a\Phi_a W_{B}+ \gamma_b \Phi_b W_{B}= B$, and the linearly correlation $\Phi_a W_a = \Phi_b$, 
then we have,
\begin{align}
    B &= \gamma_a\Phi_a W_{B}+ \gamma_b \Phi_b W_{B} \\
    &= (\gamma_a\Phi_a + \gamma_b \Phi_b) W_B \\
    &= (\gamma_a\Phi_a + \gamma_b \Phi_a W_a) W_B \\
    &= \Phi_a (\gamma_a I + \gamma_b W_a) W_B.   
\end{align}

Therefore, $W_B'=\Phi_a (\gamma_a I + \gamma_b W_a) W_B$. End of the proof.

\subsection{Proof for Theorem~\ref{theo:2.1}}
\label{app:theo2.1}

% \begin{theorem*}(its proof can be found in Appendix~\ref{app:theo2.1})
%     Suppose $\Phi_a,\Phi_b\in\mathbb{R}^{n\times d}$ are two linearly correlated feature subspaces, i.e., there exits $W_a\in\mathbb{R}^{d \times d}$ such that $\Phi_a W_a = \Phi_b$, and suppose a matrix $B\in \mathtt{R}^{d\times c}$, $c << d$.
%     If $B$ can be represented by $\Phi_a$, i.e., $\Phi_a W_{B}= B$,
%     then using both subspaces $\Phi_a, \Phi_b$ independently, i.e., $\Phi_a W_{a} + \Phi_b W_{b} = B$ is more easily obtained than a weight-sharing style.
% \end{theorem*} 

Given $\Phi_a W_{B}= B$, and two linearly correlated spaces $\Phi_a,\Phi_b\in\mathbb{R}^{n\times d}, \Phi_a W_a = \Phi_b$.

First, in the case of weight sharing, we solve the following linear system with a parameter matrix $W_B'\in\mathbb{R}^{b \times c}$.
\begin{align}
    \gamma_a \Phi_a W_B' + \gamma_b \Phi_b W_B' &= B \\
    \gamma_a \Phi_a W_B' + \gamma_b \Phi_a W_a W_B' &= B \\
    \Phi_a (\gamma_a I + \gamma_b W_a) W_B' &= \Phi_a W_{B},
\end{align}
Without loss of generality, suppose $B$ is uniquely represented by $\Phi_a$ using $W_B$, then the solution is 
$(\gamma_a I + \gamma_b W_a) W_B' = W_{B}$.
Remember that $W_a$ is given and that it is a nearly determined system.

Second, we consider an independent reweighting method, which is the same as flattening feature subspaces. In this case, we solve the following linear system with two parameter matrices $W_B^a, W_B^b\in\mathbb{R}^{b \times c}$. 
\begin{align}
    \Phi_a W_B^a + \Phi_b W_B^b &= B \\
    \Phi_a W_B^a + \Phi_a W_a W_B^b &= B \\
    \Phi_a (W_B^a + W_a W_B^b) &= \Phi_a W_{B} \\ 
   I W_B^a + W_a W_B^b &= W_{B}
\end{align}
This is an underdetermined system, such that $(I, W_a)({W_B^a}^T, {W_B^b}^T)^T=W_B$. 
Therefore, this independent reweighting is much easier to have an optimal solution compared to weight-sharing methods. End of proof.

\subsection{Proof for Theorem~\ref{theo:2.2}}
\label{app:theo2.2}

% \begin{theorem}(its proof can be found in Appendix~\ref{app:theo2.2})
% Suppose $d<< n$, where $d$ is the dimension of node attributes, i.e., $X\in \mathbb{R}^{n\times d}$. If the $z$-truncated SVD of $\hat{L}$, $||U_zS_z-\hat{L}||_2 < \epsilon$, then a miner error of the linear system: $(\Phi_k,U_zS_z) W_B' = B$ can be approached than $\Phi_k W_B = B$.
% % \label{theo:2.2}
% \end{theorem}

We provide the proof in two ways.
First, we establish a bound on the minima of the linear repression of the SVD of the regressor matrix. 
Part of the proof~\ref{app:theo1.1}:

Assume a linear regression problem $Mx=b$, and $M$ is the regressor matrix with its SVD $M=U_mS_mV_m^T$.
\begin{align}
    M x &= b \\
    U_m S_m V_m^T x &= b\\
    (U_m S_m V_m^T)^{-1}U_m S_m V_m^T x &= (U_m S_m V_m^T)^{-1} b \leftarrow \mathrm{pseudo-inverse\ using\ \ SVD}\label{eq:inverse_svd2}\\\
    U_m^TU_m x &= V_mS_m^{-1}U_m^T b \label{eq:svd2}
\end{align}
In equation~\eqref{eq:inverse_svd2} we inverse the SVD~\cite{silva2020svdregression} using the invertible property of $U_m S_m V_m^T$, $U_m^TU_m = V_mV_m^T = I$. From the equation~\eqref{eq:svd2} we can see that the best approximation of $b$ is $\hat{b} = U_m U_m^T b$ if $\hat{x} = V_mS_m^{-1}U_m^T b$. 
Therefore, if $U_m U_m^T \rightarrow I$, then perfect recovery can be further approached. 

Then we analyze the change of the regressor matrix between a thin shape feature subspace $\Phi_k$ and a concatenation of $(\Phi_k, U_zS_z)$, where $U_z,S_z$ are the truncated singular vectors and corresponding values of structural matrices, e.g. $\hat{A}$.
\begin{itemize}
    \item[1] For the case where only $\Phi_k$ is used: Since $d << n$, the singular vectors of $\Phi_k$, $U_k$ is column-wise full-rank, resulting in its $n$ rows being highly correlated.
    Given this, $U_kU_k^T$ constructs a rather dense matrix, far from an identity matrix, which leads to a huge gap that uses this feature space as a regressor matrix from a perfect recovery.

    \item[2] In the concatenation case, when $(\Phi_k,U_zS_z)$ is used:
    the singular vector of the concatenation has a permutation difference from the concatenation of their original singular vectors, e.g, 
    \begin{align}
        (\Phi_k,U_zS_z) = (U_kS_kV_k^T, U_zS_z) = (U_k,U_z) \mathrm{DiagCat}(S_k,S_z) \mathrm{DiagCat}(V_k^T,V_z^T) = U'S'V'^T,
    \end{align}
    where $\mathrm{DiagCat}()$ is the diagonal concatenation of two square matrices.
    Since $U'^TU'=I, V'^TV'=I$ and $V'V'^T=I$, this is a reasonable singular vector, but it is a column-wise permutation of $U'$ given the order of $S'$. Therefore, we represent the singular vectors and values of $(\Phi_k,U_zS_z)$ as $(U_k,U_z)P$ and $P^T(S_k,S_z)$, where $P\in\mathbb{R}^{(d+z)\times(d+z) }$ is a unitary permutation matrix. Next, we analyze the unitary property of the singular vector $(U_k,U_z)P$, i.e., if $(U_k,U_z)P ((U_k^T,U_z^T)P)^T = (U_k,U_z) (U_k^T,U_z^T)^T \rightarrow I$.
    
    Since $d << n$, we ignore the influence of $U_k U_z^T$, $U_z U_k^T$ and $U_kU_k^T$ on the result of $(U_k,U_z) (U_k^T,U_z^T)^T$, and the $U_z$ part dominates the unitary of the singular vectors. 
    Given $||U_zS_z-\hat{L}||_2 < \epsilon$, and $\epsilon$ is a small enough constant, a nearly perfect recovery of $\hat{A}$ is achieved by $U_zS_z$. Then the $U_z U_z^T$ is likely to be identity, given the sparsity property of the structural matrix.
\end{itemize}
Therefore, the second concatenation of $(\Phi_k, U_zS_z)$ can achieve a minor error in the regression.
End of proof.

Note that we make no assumptions about the distribution of the node attributes or labels.

\subsection{More discussion of parameter matrices will be stacked}
\label{app:stack}
With a bit of notation abuse, here $W$ is the parameter matrices as $\Theta$ in the equation~\eqref{eq:linear}.
\begin{proposition}
GNNs suppress the parameter space of $W$, leading to a partial expression of all columns of the entire feature space and undermining the benefit of adding redundancy to the original one.
\end{proposition}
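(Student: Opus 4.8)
The plan is to reduce the stacked-parameter situation to the weight-sharing situation already settled in Theorem~\ref{theo:1.2}, and then read off the loss of the redundancy benefit from the flattening analysis of Theorem~\ref{theo:2.1}. First I would fix the stacked models under discussion: the pure GCN approximation $H=\Phi_0\prod_{i=0}^{K-1}W^{(i)}$ from \eqref{eq:spatial_dl_1}, and the skip-connection variant \eqref{eq:spatial_dl_2}, whose per-subspace parameters $\Theta_t=\alpha^{(K-1-t)}W_0^{(K-1-t)}\prod_{j=K-t}^{K-1}W_1^{(j)}$ are not free but share the common trailing factors $W_1^{(j)}$. The essential observation is that the operator applied to the feature space is a \emph{product} of matrices rather than a free matrix, so the $\Theta_t$ cannot be chosen independently across subspaces.

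The core step is to show that this coupling collapses redundancy. Suppose a linearly correlated subspace $\Phi_b=\Phi_a W_a$ is appended to the feature space, exactly as in the hypotheses of Theorem~\ref{theo:1.2}, in the hope of relaxing an overdetermined regression to an underdetermined one. Under the stacked structure the shared factors force both subspaces to be hit by the same product $\Theta$, so the contribution has the form $\gamma_a\Phi_a\Theta+\gamma_b\Phi_b\Theta$; applying the identity from Appendix~\ref{app:theo1.2} rewrites this as $\Phi_a(\gamma_a I+\gamma_b W_a)\Theta$, i.e. a single effective subspace. Hence the appended redundant columns contribute no new degrees of freedom, and the system is \emph{not} relaxed in the way that the independently reweighted (flattened) model of Theorem~\ref{theo:2.1} achieves with its underdetermined constraint $IW_B^a+W_aW_B^b=W_B$.

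I would then record the dimensional side of the statement, which justifies ``suppress the parameter space of $W$''. Because a product of matrices obeys $\operatorname{rank}\big(\prod_i W^{(i)}\big)\le\min_i\operatorname{rank}(W^{(i)})$, any genuine bottleneck (a hidden width below the number of classes $c$) forces $\Theta$ into a strictly lower-dimensional, measure-zero variety of $\mathbb{R}^{d\times c}$; together with the collapse above this yields the ``partial expression of all columns''. Comparing parameter counts then closes the argument: the flattened model carries $\prod_t(d_t\times c)$ independent entries, whereas the stacked model is confined to the coupled, rank-bounded family, a strict reduction, so the same conclusion as Theorem~\ref{theo:2.1} --- flattening dominates stacking --- is maintained.

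The main obstacle I anticipate is turning the qualitative phrases of the statement into a precise claim without overreaching. The rank bound is only strict when a true bottleneck is present, and the collapse identity must be invoked for the multi-subspace skip-connection case, since for the plain single-subspace GCN there is no redundancy to lose in the first place; care is therefore needed to state the reduction for the coupled multi-subspace setting and to keep the hypotheses aligned with those of Theorem~\ref{theo:1.2}, namely the linear correlation $\Phi_b=\Phi_a W_a$ and $c\ll d$.
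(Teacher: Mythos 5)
Your proposal is correct in substance but takes a genuinely different route from the paper's own argument. The paper (Appendix~\ref{app:stack}) does not invoke Theorem~\ref{theo:1.2} or Theorem~\ref{theo:2.1} at all: it builds a self-contained toy example with a redundant basis $U'=(d_0,d_1,\lambda_0 d_0,\lambda_1 d_1)$, $d_0\perp d_1$ in $\mathbb{R}^2$, and works out the coefficients $b_k$ in closed form under the two concrete constraint regimes it identifies in real models --- exact sharing ($b_2=b_0$, $b_3=b_1$, as in MLP-style shared $W$) and linear dependence through a trainable scalar ($b_2=\mu b_0$, $b_3=\mu b_1$, modelling the stacked products $W_{k+1}=\prod_i W_i$). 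The "partial expression of all columns" is then read off from limiting cases: as $\lambda_0\to 0$ or $\lambda_0\to\infty$ one of the paired columns is forced toward a vanishing coefficient, and as $\mu\to 0$ the appended columns are killed outright. Your reduction to the collapse identity $\gamma_a\Phi_a\Theta+\gamma_b\Phi_b\Theta=\Phi_a(\gamma_a I+\gamma_b W_a)\Theta$ captures the "redundancy is undermined" half cleanly and is essentially the Theorem~\ref{theo:1.2} mechanism reused; it buys generality and avoids a hand-picked example, but it treats the stacked case as if both subspaces were hit by literally the same $\Theta$, whereas in \eqref{eq:spatial_dl_2} the $\Theta_t$ only share trailing factors and each retains a free leading $W_0^{(K-1-t)}$ --- this weaker coupling is exactly what the paper's second regime (the $\mu$-scalar case) is designed to model, and your argument would need the analogous "linearly dependent but not identical coefficients" computation to cover it. Your rank-bound and parameter-count additions are sound but orthogonal to the paper's reasoning, which never appeals to rank bottlenecks; as you note yourself, that part only bites when a genuine width bottleneck exists, so it supports rather than proves the claim. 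Each approach has merit: the paper's example makes "partial expression" concrete and quantitative in the limits, while yours ties the proposition back to the already-established theorems and makes the degrees-of-freedom accounting explicit.
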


\begin{proof}
% low-rank of $W$ restrict the solution space

We summarize the constraints on $W$ in current GNNs as the following:
i) in the case of MLP-based implementation~\cite{he2020lightgcn}, all layers share the same $W$, which forces the layer-wise representation parameters into a single matrix; %  and to be a low-rank structure%  redundant the whole feature space, 
and ii) in the case of layer-wise $W$~\cite{kipf2016gcn}~\cite{li2019deepgcns}, each $W_{k+1}$ is built upon its previous one, i.e,  $W_{k+1} = \prod_{i=0}^{k+1} W_{i}$.
%when $W_{k+1}$ is a column-wise reconstruction of $W_{k}$, or $W_k = \theta_k W$ in spectral approaches~\cite{}.  
% \todo{this low-rank needs to be proven rather than given, prove it or cite this from somewhere.}
% 先说两个情况，抽象，并得到的不好的结果；最后再给这是个low-rank情况导致了以上的后果。
We extract the ideas of these constraints into the following example. Suppose a redundant feature space $U' = (d_0, d_1, \lambda_0d_0, \lambda_1d_1)$, where $d_0 \perp d_1, d_k\in\mathbb{R}^2$. $x \in \mathrm{Span}\{d_0,d_1\}$ need to be recovered by the elements in $U'$.

We deploy the aforementioned two types of constraint on the undecided variables $b_0, b_1, b_2$, and $b_3$:
i) $b_2 = b_0, b_3 = b_1$, and ii) $b_2 = \mu b_0, b_3 = \mu b_1$, where $\mu$ is a trainable scalar. 
They align with the graph neural networks.
We begin by discussing these two cases. 
% put the discussion after this proof.

Representing $x$ in the first case, yields:   %we require \todo{why require?}: % 衔接细致
\begin{align}
    x &= b_0d_0 + b_1d_1 + b_0 \lambda_0 d_0 + b_1 \lambda_1 b_1\\
    &= (1+\lambda_0)b_0d_0 + (1+\lambda_1)b_1d_1.
\end{align}
Using the unique representation theorem~\cite{linearalgebra}, we have $(1+\lambda_0)b_0 = a_0$ and $(1+\lambda_1)b_1 = a_1$. Put it in a matrix multiplication format: 
\begin{align}
    \begin{pmatrix}
    1 & 0 & \lambda_0 & 0\\
    0 & 1 & 0 & \lambda_1\\
    \end{pmatrix}
    \begin{pmatrix}
    b_0 \\ b_1 \\ b_0 \\ b_1\\
    \end{pmatrix}
    =
    \begin{pmatrix}
    a_0 \\ a_1
    \end{pmatrix},
\end{align}
which produces:
\begin{align}
    \begin{pmatrix}
    1 + \lambda_0 & 0\\
    0 & 1 + \lambda_1\\
    \end{pmatrix}
    \begin{pmatrix}
    b_0 \\ b_1 \\
    \end{pmatrix}
    =
    \begin{pmatrix}
    a_0 \\ a_1
    \end{pmatrix}.
\end{align}
It holds the closed form that $b_0 = \frac{a_0}{(1+\lambda_0)}, b_1 = \frac{a_1}{(1+\lambda_1)}$.

% 这个方程依旧从两个方程中得到两个解，并没有拓展原来的解集
Then, we represent $x$ in the second case:
\begin{align}
    x &= b_0d_0 + b_1d_1 + \mu b_0 \lambda_0 d_0 + \mu b_1 \lambda_1 b_1\\
    &= (1+\mu\lambda_0)b_0d_0 + (1+\mu\lambda_1)b_1d_1,
\end{align}
which produces $(1+\mu\lambda_0)b_0 = a_0$ and $(1+\mu\lambda_1)b_1 = a_1$. Formulate them in a matrix multiplication:
\begin{align}
    \begin{pmatrix}
    1 & 0 & \lambda_0 & 0\\
    0 & 1 & 0 & \lambda_1\\
    \end{pmatrix}
    \begin{pmatrix}
    b_0 \\ b_1 \\ \mu b_0 \\ \mu b_1\\
    \end{pmatrix}
    =
    \begin{pmatrix}
    a_0 \\ a_1
    \end{pmatrix}.
\end{align}
This is a under-determined system and gives $b_0 = \frac{a_0}{(1+\mu\lambda_0)}$, $b_1 = \frac{a_1}{(1+\mu\lambda_1)}$, $b_2 = \frac{\mu a_0}{(1+\mu\lambda_0)}$, and $b_3 = \frac{\mu a_1}{(1+\mu\lambda_1)}$.

We look into the values of $b_k$ to get the expressivity of the base $D$. 
Given the extreme case where $\lambda_0 \rightarrow 0$, the appended $\lambda_0d_0$ is constrained while the original one keeps expressing. On the contrary, when $\lambda_0 \rightarrow \infty$, the original base $d_0$ is constrained by $\frac{a_0}{(1+\lambda_0)}$ or $\frac{a_0}{(1+\mu\gamma_0)}$ while the appended one expresses. 
Besides, for the second case, when $\mu \rightarrow 0$, the corresponding bases are limited by $b_2, b_3 \rightarrow 0$. Consequently, both cases lead to partial expression of the the whole feature space.

Finally, compared these two cases, i.e., (60) and (63) to (56), we find that they merely restrict the parameter space of $(b_0, b_1, b_2, b_3)^T$ by either sharing the values of each other or enforcing their linear dependence.
Therefore, restricting the parameter space in these two cases leads to partial expression of the the whole feature space. 
This proof is completed.
\end{proof}

\subsection{The column-wise normalization in FE-GNN and in current GNNs}
\label{app:atom}
\textbf{Does the column-wise normalization matter? It matters if the node scale is not huge.}\quad 
As shown in Table~\ref{tab:fegnn_ablation}, we find that column-wise normalization works well in most cases, except for PubMed.
This may be because the large node scale of PubMed causes the value of the normalized feature space to be tiny. 

Then, we include some 10-order polynomial functions to see the different responses of column-wise normalization of these models. 
Column-wise normalization is defined as forcing $\lVert F_{\cdot i}\rVert_2=1$, where we take $F$ as the concatenation of the feature space. 
We extend this to an arbitrary $k$ times $\lVert F_{\cdot i}\rVert_2=1$, i.e., $\lVert F_{\cdot i}\rVert_2=k$, 
which is equivalent to measuring the degree of consistency of each $\lVert F_{\cdot i}\rVert_2$.
Therefore, we report the standard variance of $\{\lVert F_{\cdot i}\rVert_2;i=1,2,\cdots\}$, and the smaller the value, the greater the response of the column-wise normalization. 

\begin{table}[h]
    \centering
    \caption{The column-wise normalization response for different polyonmials on Cora}
    \begin{tabular}{c|ccc}
    \toprule
         & ChebyShev polynomial & Bernstein polynomial & Monomial polynomial\\
    \midrule
        Cora & 3.8246 & 4.7044e-06 & 0.4947 \\
        CiteSeer & 34.6432 & 0.0023 & 24.4430 \\
        Chameleon & 660.4274 & 0.7308 & 1039.2469 \\
        Squirrel & 245.6538 & 0.7063 & 700.9365 \\
    \bottomrule
    \end{tabular}
    \label{tab:atom}
\end{table}

Chebshev, Bernstein, and Monomial polynomials are compared in Table~\ref{tab:atom}.  
The Bernstein polynomial produces the least variance, indicating that it
it promotes the most atomicity compared to other polynomials. 
This observation is consistent with the narrative in the original BernNet paper~\cite{he2021bernnet}, where the authors claim that the Bernstein polynomial is more numerically stable than other polynomial functions.

\section{Experimental settings}
\label{app:settings}
\subsection{Dataset details}
The datasets are concluded in Table~\ref{tab:dataset}, with licenses. 
\footnote{Chameleon, Squirrel: https://github.com/benedekrozemberczki/MUSAE/blob/master/LICENSE}
\footnote{Cora, CiteSeer, PubMed, Actor: https://networkrepository.com/policy.php}
\footnote{Computers, Photo: https://github.com/shchur/gnn-benchmark/blob/master/LICENSE}
Cora, CiteSeer, and PubMed are commonly used homophilic citation networks~\cite{yang2016revisiting}.
Computers and Photo are homophilic co-bought networks from Amazon~\cite{shchur2018pitfalls}.
For heterophilic datasets, we utilize hyperlinked networks Squirrel and Chameleon from \cite{pei2020geom}, 
and Actor, a subgraph from the film-director-actor network~\cite{rozemberczki2021multi}.
PyG\footnote{https://pytorch-geometric.readthedocs.io/en/latest/modules/datasets.html} are employed to get these data.
Each datasets are split into three parts using random selection:  60\% as the training set, 20\% as the validation set, and 20\% as the test set.
We set these datasets to undirected graphs as we assumed in the Preliminaries.

\begin{table*}[h]
    \centering
    \footnotesize
    \caption{Statistics of Datasets}
    \begin{tabular}{cccccc|ccc}
    \toprule
    \footnotesize
    % &\multicolumn{5}{c}{Homophilic graphs}&\multicolumn{3}{c}{Heterophilic graphs}\\
    % \cmidrule(r){2-6}   \cmidrule(l){7-9}
    % & CiteSeer & Cora & Computers & Photo & PubMed & Actor & Chameleon & Squirrel \\%& Cornell & Texas \\
    % \midrule
    % $|\mathcal{V}|$ & 3,327 & 2,708 & 13,752 & 7,650 & 19,717 & 7,600 & 2,277 & 5,201 \\%& 183 & 183 \\
    % $|\mathcal{E}|$ & 4,552 & 5,278 & 491,722 & 238,162 & 44,338 & 238,162 & 72,202 & 434,146 \\%& 298 & 325 \\
    % % $d$ & 3,703 & 1,433 & 767 & 745 & 500 & 932 & 128 & 128 & 1,703 & 1,703 \\
    % $h(\mathcal{G})$ & 0.736 & 0.810 & 0.777 & 0.827 & 0.802 & 0.219 & 0.233 & 0.224 \\%& 0.305 & 0.108 \\
    % $c$ & 6 & 7 & 10 & 8 & 3 & 5 & 6 & 5 \\%& 5 & 5 \\
    & Cora & CiteSeer & PubMed & Computers & Photo & Squirrel & Chameleon & Actor \\
    \midrule
    $|\mathcal{V}|$ & 2,708 & 3,327 & 19,717 & 13,752 & 7,650 & 5,201 & 2,277 & 7,600\\
    $|\mathcal{E}|$ & 5,278 & 4,552 & 44,338 & 245,861 & 119,081 & 217,073 & 36,101 & 30,019 \\
    \# Features & 1433 & 3703 & 500 & 767 & 745 & 128 & 128 & 932 \\
    $h(\mathcal{G})$ & 0.81 & 0.74 & 0.80 & 0.78 & 0.83 & 0.22 & 0.23 & 0.22 \\    
    $d(\mathcal{G})$ & 1.95 & 1.37 & 2.25 & 17.88 & 15.57 & 41.74 & 15.85 & 3.95 \\
    \bottomrule 
    \end{tabular}
    \label{tab:dataset}
\end{table*} 

% \subsection{Details about optimization and reported results}
We report the average accuracy~(micro F1 score) in the classification task with a 95\% confidence interval in all the tables and figures. For each result, we run 100 times on 10 random seeds. 
% Besides, we present the standard variance of Table~\ref{tab:wgnn_overall} in the Table~\ref{tab:overall_var} followed:
% \input{iclr2023/tab/tab_overall_var}
We employ Adam for optimization and set
the early stopping criteria as a warmup of $50$ pluses patience of $200$ for a maximum of $100$ epochs.
We conduct all the experiments on the machine with NVIDIA 3090 GPU~(24G) and Intel(R) Xeon(R) Platinum 8260L CPU @ 2.30GHz. 

\subsection{Searching space for baselines hyper-parameters}
For FE-GNN, we turn the following hyper-parameters by the grid search.
\begin{itemize}
    \item Learning rate: $\{0.01,0.05,0.1\}$
    \item Weight decay: $\{0.0005,0.001,0.005,0.01,0.02,0.05\}$
    \item $|S|$ for homophilic graphs: $\{0,10,50,100,200,500,1000,2000\}$
    \item $|S|$ for heterophilic graphs: $\{500,600,700,800,900,1000,1500,2000\}$
    \item Suggested $|S|$: the whole hundred from the 94\% singular values
    \item Hidden size: $64$
    \item Ranks $k$ of the polynomial $P_k(\hat{L})$: $\{0,1,2,3\}$
\end{itemize}

\label{appexp:hyper}
\begin{table}[h]
\caption{The universally used hyper-parameters for FE-GNN.}
    \centering
    \begin{tabular}{c|cccccccc}
    \toprule
            & lr & weight decay & $\vert S\vert$ & hidden & $k$\\
        \midrule
        Cora &  0.01 & 0.01 & 50 & 64 & 3 \\
        CiteSeer & 0.01 & 0.02 & 100 & 64 & 1 \\
        PubMed & 0.01 & 0.005 & 100 & 64 & 3 \\
        Computers & 0.01 & 0.0005 & 1000 & 64 & 3 \\
        Photo & 0.01 & 0.0005 & 500 & 64 & 3 \\
        Squirrel & 0.01 & 0.001 & 2000 & 64 & 3 \\
        Chameleon & 0.01 & 0.0005 & 700 & 64 & 3 \\
        Actor & 0.01 & 0.001 & 10 & 64 & 0 \\
        \bottomrule
    \end{tabular}
    \label{tab:hyper_gcf}
\end{table}

\begin{table}[h]
\caption{The turned hyper-parameters for the baselines.}
    \centering
    \begin{adjustbox}{width=\textwidth}
    \begin{tabular}{c|cccccccc}
    \toprule
            & lr & weight decay & dropout & hidden & layers/ranks & others\\
        \midrule
        MLP &  \{0.01, 0.05\} & 0.0005 & \{0.5, 0.6, 0.8\} & 64 & 2 & -\\
        \midrule
        GCN & \{0.01, 0.05\} & 0.0005 & \{0.5, 0.6, 0.8\} & 64 & \{2,3\} & -\\
        GAT & \{0.01, 0.05\} & 0.0005 & \{0.5, 0.6, 0.8\} & 64 & \{2,3\} & $heads$:\{1,8\}\\
        GraphSAGE & \{0.01, 0.05\} & 0.0005 & \{0.5, 0.6, 0.8\} & 64 & \{2,3\} & -\\
        GCNII & \{0.01, 0.05\} & 0.0005 & 0.5 & 64 & \{2,4,10\} &$\alpha,\theta$:\{0.1, 0.2, 0.5, 0.8, 0.9\}\\
        APPNP & \{0.01, 0.05\} & 0.0005 & 0.5 & 64 & \{2,3,4,5,8\} &$\alpha$:\{0.1, 0.2, 0.5, 0.8, 0.9\}\\
        \midrule
        ChebNet & \{0.005, 0.01, 0.05\} & \{0.0, 0.0005\} & \{0.1, 0.2, 0.5\} & 64 & 10 & -\\
        GPRGNN & \{0.005, 0.01, 0.05\} & \{0.0, 0.0005\} & \{0.1, 0.2, 0.5\} & 64 & 10 & -\\
        \multirow{2}{*}{BernNet} & \multirow{2}{*}{\{0.005, 0.01, 0.05\}} & \multirow{2}{*}{\{0.0, 0.0005\}} & \multirow{2}{*}{\{0.1, 0.2, 0.5\}} & \multirow{2}{*}{64} & \multirow{2}{*}{10} & $prop\_drate$:\{0.001,0.02,0.01,0.05\}\\
        &&&&&&$prop\_lr$:\{0.0, 0.1, 0.2, 0.5, 0.6, 0.7, 0.9\}\\
        \midrule
        ADA-GNN & \{0.05, 0.01\} & \{0.0005, 0.00005\} & \{0.2,0.5,0.8\} & 64 & \{2,5,10\} & $s$:\{1,9,19,29\}\\
        GNN-LF & 0.01 & 0.005 & 0.5 & 64 & 10 & $\alpha,\mu$: \{0.1, 0.2, 0.3, 0.4, 0.5, 0.6, 0.8, 0.9\} \\
        GNN-HF & 0.01 & 0.005 & 0.5 & 64 & 10 & $\alpha,\beta$: \{0.1, 0.2, 0.3, 0.4, 0.5, 0.6, 0.8, 0.9\} \\
        \bottomrule
    \end{tabular}
    \end{adjustbox}
    \label{tab:hyper_base}
\end{table}

Table~\ref{tab:hyper_gcf} represents the hyper-parameters searched for the baselines used in our experiments. We prioritize their original released code repository, and the ranges of turning parameters are according to their papers.
\begin{itemize}
    \item MLP, GCN, GAT GraphSAGE, APPNP, GCNII are implemented with PyG.~\footnote{https://github.com/pyg-team/pytorch\_geometric}
    \item ChebNet is implemented according to the code style of BernNet/GPRGNN.
    \item GPRGNN is implemented according to its original code repository.~\footnote{https://github.com/jianhao2016/GPRGNN}
    \item BernNet is implemented according to its original code repository.~\footnote{https://github.com/ivam-he/BernNet} 
    \item ADA-UGNN is implemented according to its original code repository.~\footnote{https://github.com/alge24/ADA-UGNN} 
    \item GNN-HF/LF are implemented according to its original code repository.~\footnote{https://github.com/zhumeiqiBUPT/GNN-LF-HF} 
\end{itemize}

\subsection{Other transformations for compacting graph structure information}
\label{app:other_sj}
We add other possible transformations to extract compacted information from the normalized adjacency matrix $\hat{A}$.
We compare them in detail:
\begin{itemize}
    \item KernelPCA: a non-linear kernel PCA method using the Radial Basis Function~(RBF).
    \item FastICA: a fast version of Independent Component Analysis, which is a linear method.
    \item IsoMap: a nonlinear dimensionality reduction method based on spectral theory.
    \item LINKX~\cite{lim2021large}: an MLP architecture for graph-structured data that includes the graph adjacency matrix as part of the feature space. Due to its inferior performance compared to other baselines, such as GCNII, we exclude it from our main comparisons in the main context.
    
\end{itemize}
All of them can be easily implemented using the $\mathtt{sklearn}$ package. 
As shown in the table~\ref{tab:other_sj}, our chosen truncated SVD has comparable performance and we stick with it for further analysis in the main text.
We regard further investigation of the complex extraction methods as future work, which is beyond the scope of this paper.

\begin{table}[h]
    \centering
    \begin{adjustbox}{width=0.6\columnwidth}
    {\color{black}\begin{tabular}{cccccc}
    \toprule
        & Cora & CiteSeer & Chameleon & Squirrel & Photo \\
        \midrule
        None & \underline{$89.20 _{\pm0.93}$} & $81.95 _{\pm0.87}$ & $61.54 _{\pm1.52}$ & $43.21 _{\pm0.99}$ & -  \\
            \cellcolor{Gray}Truncated-SVD & \cellcolor{Gray}$\textbf{89.45}_{\pm0.22}$ & \cellcolor{Gray}\underline{$81.96 _{\pm0.23}$} & \cellcolor{Gray}\underline{$73.33 _{\pm0.35}$} & \cellcolor{Gray}$67.90 _{\pm0.23}$ & \cellcolor{Gray}{$\textbf{95.45} _{\pm0.15}$}  \\
        KernelPCA & $88.61 _{\pm0.82}$ & {$\textbf{81.99} _{\pm1.11}$} & {$\textbf{73.66} _{\pm1.45}$} & $\textbf{68.79} _{\pm1.13}$ & \underline{$95.36 _{\pm0.51}$}\\
        FastICA & $88.77 _{\pm1.09}$ & $81.92 _{\pm1.00}$ & $73.32 _{\pm1.37}$ & \underline{$68.12 _{\pm0.97}$} & $95.30 _{\pm0.22}$ \\
        IsoMap & $88.54 _{\pm0.86}$ & $82.07 _{\pm1.15}$ & $67.00 _{\pm1.54}$ & $54.47 _{\pm0.87}$ & $94.88 _{\pm0.34}$\\
        LINKX & - & - &  $68.42_{\pm 1.38}$ & $61.81_{\pm 1.80} $ & - \\
        \bottomrule
    \end{tabular}}
    \end{adjustbox}
    \caption{\color{black}Comparing the transformations in compacting the normalized adjacency matrix}
    \label{tab:other_sj}
\end{table}

\subsection{Results of FE-GNN using different polynomial orders}
\label{app:best_k}
In the main text, we implement the polynomial order $K$ in the range of three based on the empirical observations, e.g. Figure~\ref{fig:fegnn_oversmoothing}. Here we provide more comprehensive results for different choices of $K$.

Table~\ref{tab:best_K} shows that we can find better $K$ in a wider range, although the improvement may be marginal.
\begin{table}[h]
    \centering
    {\color{black}
    \begin{adjustbox}{width=0.8\textwidth}
    \begin{tabular}{ccccccc}
    \toprule
       $K$ & 4 & 5 & 6 & 7 & 8 & 9\\
\midrule
Cora & $89.60 \pm 0.30$ & $89.44 \pm 0.25$ & $89.52 \pm 0.26$ & $89.35 \pm 0.24$ & $89.34 \pm 0.22$ & $89.08 \pm 0.25$ \\
CiteSeer & $80.66 \pm 1.09$ & $81.15 \pm 1.06$ & $81.11 \pm 0.89$ & $80.83 \pm 1.07$ & $80.54 \pm 1.03$ & $80.10 \pm 1.02$ \\
Computers & $91.01 \pm 0.46$ & $90.90 \pm 0.51$ & $90.98 \pm 0.42$ & $90.77 \pm 0.39$ & $90.82 \pm 0.41$ & $90.45 \pm 0.32$ \\
Chameleon & $73.42 \pm 0.40$ & $73.69 \pm 0.43$ & $73.62 \pm 0.43$ & $73.68 \pm 0.42$ & $73.80 \pm 0.39$ & $73.75 \pm 0.38$ \\
Squirrel & $68.26 \pm 0.78$ & $68.41 \pm 0.88$ & $68.55 \pm 0.82$ & $68.83 \pm 0.68$ & $68.92 \pm 0.76$ & $69.06 \pm 0.93$ \\
\bottomrule
    \end{tabular}
    \end{adjustbox}}
    \caption{\color{black}Comparison of different $K$ within $10$}
    \label{tab:best_K}
\end{table}

% \subsection{A Deeper Study of Over-smoothing}
% In this section, we append a more concrete study of over-smoothing problem.
% Specifically, we test FE-GNN with $K=\{10,20,30,40,50,60,70,80\}$ and compare with a representative method GCNII that overcomes over-smoothing.
% % \input{iclr2023/tab/new/tab_very_large_K.tex}
% % \input{iclr2023/tab/new/tab_very_large_K_sc.tex}
% % \input{iclr2023/tab/new/tab_very_large_K_de.tex}
% The results of Table~\ref{tab:very_all} verify the effectiveness of FE-GNN on avoiding over-smoothing which expresses superior capability as Residual Connection and DropEdge and even achieve comparable results with GCNII. Note that all the hyper-parameters including Dropout, DropEdge, $\alpha,\theta$ for SkipConnection and GCNII is searched within $\{0.2, 0.5, 0.8\}$.
% \input{iclr2023/tab/new/tab_very_large_K_gcnii.tex}

\begin{figure}[h]
    \centering
    \includegraphics[width=\textwidth]{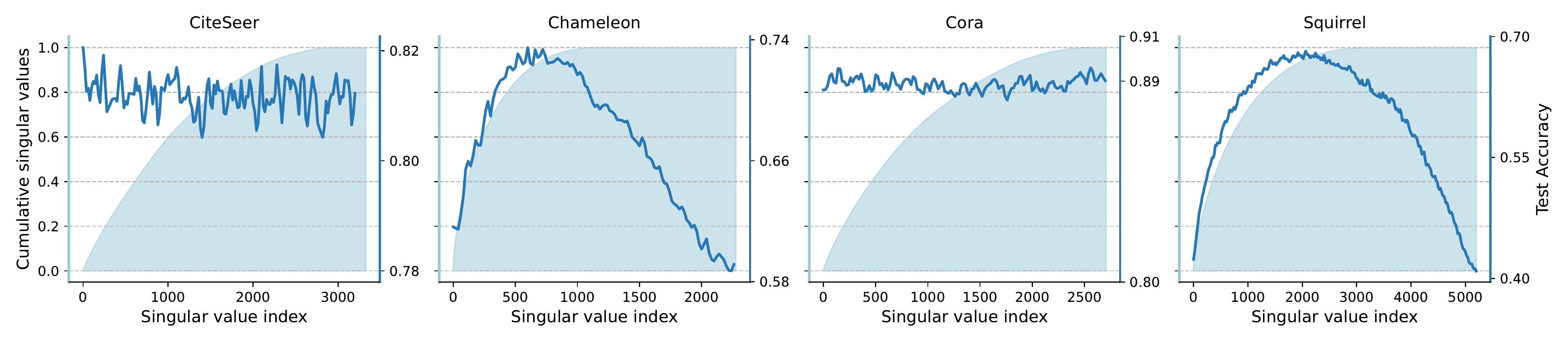}
    \caption{\color{black}More results of SVD}
    \label{fig:app_svd}
\end{figure}

\subsection{\textcolor{black}{Results of FE-GNN on public splits}}
\textcolor{black}{We additionally add experiments of FE-GNN on the public splits from Geom-GCN~\cite{pei2020geom}, as shown in the following Table~\ref{tab:rebuttal_public}. We can see that our proposal still achieves comparable performance. However, the linearity of FE-GNN may constrain its capability in few-shot settings, which is also observed in SGC~\cite{wu2019simplifying}.}

\begin{table}[]
    \centering
    \caption{\textcolor{blue}{Comparisons of FE-GNN on public splits}}
    \begin{tabular}{cccc}
    \toprule
     & Cora & CiteSeer & PubMed \\
\midrule
\cellcolor{Gray} FE-GNN (Ours) & \cellcolor{Gray}$89.13 \pm 1.30 $ & \cellcolor{Gray}$79.13 \pm 1.54$ & \cellcolor{Gray}$89.88 \pm 0.64$ \\
 GCNII & $88.37 \pm 1.25$ & $77.33 \pm 1.48$ & $90.15 \pm 0.43$ \\
    \bottomrule
    \end{tabular}
    \label{tab:rebuttal_public}
\end{table}

\subsection{More about SVD}
\label{app:svd}
In the main test, we append the results to CiteSeer and Squirrel to better verify the importance of the principal components in extracting information from the adjacency matrix in $S_j$.
As shown in Figure~\ref{fig:app_svd} below, we find the exact results we shared in section~\ref{sec:experiments}. Cora and CiteSeer both 1) have a smoothing distribution of the singular values and 2) the information from the graph structure is less important than the node features and their interaction, therefore the change in performance is more stable with the introduction of more principal components. On the other hand, Chameleon and Squirrel 1) have a more centralized distribution of singular values and 2) graph structure is more important information, resulting in a tendency of performance to first increase and then decrease. In general, we can achieve satisfactory results on both types of data sets when 94\% of the principal components are included.

Here, we offer more intuition about the use of principal components. The principal components project and summarize larger correlated variables into smaller and more interpretable axes of variation. It is ideal for $S_j$ to embody the graph structure information from the adjacency matrix because the adjacency matrix is sparse and high-dimensional, but each node is topologically correlated. However, the different components must be distinct from each other to be interpretable, otherwise, they just represent random directions, which leads to noise.

\subsection{\textcolor{black}{Comparisons of structural principal components to a random/trainable bias term}}
\textcolor{black}{To further evaluate the contribution of the proposed SVD space - we would like to emphasize that the SVD space encodes the structure information of the adjacency matrix, which should also be a critical character of GNNs' feature space. To verify that this structure information is indispensable and cannot be replaced by a structure-free matrix, we provide the following two experiments, both of which perform the replacement on the SVD part $SW_s$, where $S$ is the fixed structural principal components matrix and $W_s$ is the corresponding learnable weight matrix.}

\textcolor{black}{The first experiment, labeled '$CW_c$~(random/orthogonal)', replaces the structural components $S$ as a random matrix $C$ with the corresponding weights $W_c$ to be learned. The other setting, denoted '$C$~(random/orthogonal)', replaces the sum $SW_s$ as a randomly initialized bias term $C$ to be learned. For a fair comparison with the orthogonal SVD $S$, we add orthogonal initialization variants to both experiments, denoted '(orthogonal)'. And the '(random)' suffix represents the 'xavier\_normal' initialization. The results are shown in Table~\ref{tab:rebuttal_cwc}; it shows both $C$ and $CW_c$ perform worse than our proposal, especially on the heterophilic Chameleon and Squirrel datasets, which justifies our discussion of the structural principal components.}

\begin{table}[]
    \centering
    \caption{\textcolor{blue}{Comparisons of structural principal components and learnable bias}}
    \begin{tabular}{ccccccc}
    \toprule
& Cora & CiteSeer & Chameleon & Squirrel & Computers & Photo \\
\midrule
\cellcolor{Gray}FE-GNN (ours) & \cellcolor{Gray}$89.45 \pm 0.22$ & \cellcolor{Gray}$81.96 \pm 0.23$ & \cellcolor{Gray}$73.33 \pm 0.35$ & \cellcolor{Gray}$67.90 \pm 0.23$ & \cellcolor{Gray}$90.79 \pm 0.08$ & \cellcolor{Gray}$95.45 \pm 0.15$ \\
FE-GNN w/o $S$ & $89.20 \pm 0.93$ & $81.95 \pm 0.87$ & $61.54 \pm 1.52$ & $43.21 \pm 0.99$ & $88.48 \pm 0.80$ & $94.94 \pm 0.79$ \\
$CW_c$ (random) & $89.07 \pm 1.17$ & $81.54 \pm 1.38$ & $54.93 \pm 1.89$ & $43.13 \pm 1.37$ & $89.54 \pm 0.83$ & $94.56 \pm 1.05$ \\
$CW_c$ (orthogonal) & $89.31 \pm 1.20$ & $81.65 \pm 1.34$ & $55.96 \pm 2.00$ & $36.87 \pm 1.09$ & $88.62 \pm 0.64$ & $94.87 \pm 0.73$ \\
$C$ (random) & $89.03 \pm 1.34$ & $80.83 \pm 1.20$ & $61.03 \pm 2.18$ & $43.32 \pm 1.25$ & $88.85 \pm 0.89$ & $93.46 \pm 3.55$ \\
$C$ (orthogonal) & $89.16 \pm 1.28$ & $80.99 \pm 1.24$ & $61.17 \pm 2.09$ & $43.09 \pm 1.34$ & $88.94 \pm 0.78$ & $94.84 \pm 0.79$ \\
\bottomrule
    \end{tabular}
    \label{tab:rebuttal_cwc}
\end{table}
\subsection{\textcolor{black}{More results of FSGNN~\cite{maurya2021simplifying}}}
\label{app:rebuttal_fsgnn}

\textcolor{black}{We append some experimental results of FSGNN; we adopt the grid search on the range that their original paper reported (Table 5 of \cite{maurya2021simplifying}), and the results are shown in Table~\ref{tab:rebuttal_fsgnn}. We could see that it cannot achieve better performance on both homophilic and heterophilic datasets.}

\begin{table}[]
    \centering
    \caption{\textcolor{blue}{Comparisons of FSGNN on random 60/20/20 splits}}
    \begin{tabular}{cccccc}
    \toprule
& Cora & CiteSeer & PubMed & Chameleon  & Squirrel  \\
\midrule
\cellcolor{Gray}FE-GNN (Ours) & \cellcolor{Gray}$89.45 \pm 0.22$ & \cellcolor{Gray}$81.96 \pm 0.23$ & \cellcolor{Gray}$90.79 \pm 0.08$ & \cellcolor{Gray}$73.33 \pm 0.35$ & \cellcolor{Gray}$67.90 \pm 0.23$ \\
FSGNN & $87.01 \pm 1.61$ & $79.45 \pm 1.78$ & $90.71 \pm 0.68$  & $66.33 \pm 1.04$ & $54.62 \pm 1.57$ \\
\bottomrule
    \end{tabular}
    \label{tab:rebuttal_fsgnn}
\end{table}

\section{\textcolor{black}{Additional Feature Space Perspective Explanations}}

% \subsection{\textcolor{black}{Illustration of the feature space under different circumstances}}
\textcolor{black}{We give a simple illustration from the point of view of using graph structure data, where node attributes and graph structure are the only sources of information. We can think of a pie chart, as shown in Figure~\ref{fig:rebuttal_feature_space}, where the proportions in blue, yellow, and green colors represent node attributes (e.g., $X$), graph structure (e.g., $S$), and their combination (e.g., $LX$)'s contribution to the overall expressiveness of the feature space. Under different conditions of the graph, each part has different proportions.
For example, if the graph is more homophilic, then the combination of the part (in green) contributes more. And if the dimensionality of node attributes is limited, then the proportion of node attributes (in blue) is compact. In general, the task of graph representation learning is to take full advantage of all these three pieces of information. Therefore, from a feature space view, it naturally treats each piece of information equally and aims to construct an extensive feature space that includes all three pieces in the pie chart. In addition, it shows that FE-GNN should be less effective when $d\rightarrow n$, because the feature space is essentially adequate.}

\textcolor{black}{However, traditional GNN models consider the graph structure as an information path to do message-massing, i.e., propose to aggregate more hops from neighbors. Since they are restricted to avoid that the graph structure is actually another information source, they could only include two parts in the piechart where the individual graph structure (in yellow) is missing. Therefore, if the node attributes are limited and the graph obeys heterophily, only the feature space view can do a good job by including the graph structure as an individual information source.}

\textcolor{black}{Furthermore, the difference in feature space between the homophilic and heterophilic graphs could be: the combination part (in green) of the homophilic pie chart is much larger than that of the heterophilic one. This difference is particularly strong when we look at another part of the feature subspace - the node attributes. The proportion of node attributes $X\in\mathbb{R}^{n\times d}$ (in blue) of the pie chart depends on the dimensionality $d$. If $d\rightarrow n$, (we reasonably assume that the high-dimensional node attributes are sparse here), then the node attributes themselves could form an extensive feature space, making the proportion large and the individual graph structure less important. However, if both the graph obeys heterophily and the node attributes are limited, then the remaining feature subspace of the graph structure (in yellow) is so important; this is also consistent with our ablation study in Table 5, especially for Chameleon and Squirrel.}

\begin{figure}[h]
    \centering
    \includegraphics[width=0.9\textwidth, trim={0 6cm 0 3cm}, clip]{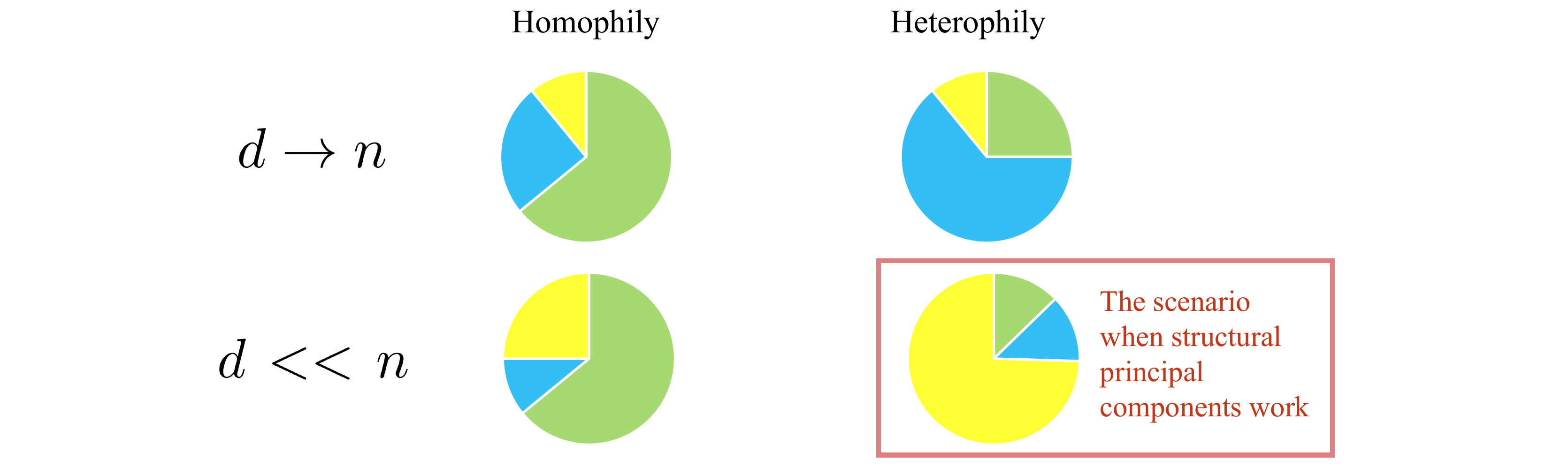}
    \caption{\textcolor{black}{Example diagram of feature space}}
    \label{fig:rebuttal_feature_space}
\end{figure}

% You can have as much text here as you want. The main body must be at most $8$ pages long.
% For the final version, one more page can be added.
% If you want, you can use an appendix like this one, even using the one-column format.
%%%%%%%%%%%%%%%%%%%%%%%%%%%%%%%%%%%%%%%%%%%%%%%%%%%%%%%%%%%%%%%%%%%%%%%%%%%%%%%
%%%%%%%%%%%%%%%%%%%%%%%%%%%%%%%%%%%%%%%%%%%%%%%%%%%%%%%%%%%%%%%%%%%%%%%%%%%%%%%

\end{document}